\documentclass{article}
\PassOptionsToPackage{numbers, compress}{natbib}

\usepackage[preprint]{neurips_2026}

\usepackage[most]{tcolorbox}
\usepackage{hyperref}
\usepackage{url}
\usepackage{caption}
\usepackage{graphicx} 
\usepackage{enumitem}
\usepackage{booktabs}
\usepackage{multirow}
\usepackage{makecell}
\usepackage{graphicx}
\usepackage{subcaption}
\usepackage{algorithm}
\usepackage{algpseudocode}
\usepackage{amsmath, amssymb, amsthm}
\newtheorem{lemma}{Lemma}

\newtheorem{corollary}{Corollary}
\newtheorem{proposition}{Proposition}
\NewDocumentCommand{\heng}
{ mO{} }{\textcolor{red}{\textsuperscript{\textit{Heng}}\textsf{\textbf{\small[#1]}}}}

\NewDocumentCommand{\thao}
{ mO{} }{\textcolor{blue}{\textsuperscript{\textit{thao}}\textsf{\textbf{\small[#1]}}}}

\title{MolLingo: Molecule-Native Representations for LLM-Powered Scientific Agents}

\author{
 Thao Nguyen\textsuperscript{1} \quad
 Heng Ji\textsuperscript{1} \\
 \textsuperscript{1}Siebel School of Computing and Data Science, University of Illinois Urbana-Champaign \\
 \texttt{\{thaotn2, hengji\}@illinois.edu}
}

\begin{document}

\maketitle



\begin{abstract}

We present \textbf{MolLingo}, a multi-agent system that emulates the reasoning process of a chemist to automate molecular design. Existing LLM-based approaches to molecular design either operate as standalone generative models without access to external tools, or lack the multi-agent coordination and shared memory needed to support iterative, evidence-driven reasoning across the full molecular design pipeline.
MolLingo addresses this by coordinating a Literature Agent, a Chemist Agent, and an Orchestrator through a shared memory module, where each agent is equipped with domain-specific tools.
To enable effective molecular reasoning within this multi-agent framework, we introduce BRICS-based Fragment Enumeration (BFE). BFE is a synthesis-aware molecular fragmentation method that decomposes molecules into chemically meaningful building blocks, represented as block-based SMILES paired with common chemical names. This representation bridges the gap between molecular structure and LLM semantic space, enabling block-level reasoning and editing that is intractable with raw SMILES.
As a case study in early-stage therapeutic design, MolLingo further grounds the Chemist Agent's reasoning in binding site geometry and residue-level protein context derived from molecular docking, to optimize molecules for stronger target binding. Across four benchmarks, MolLingo consistently outperforms frontier LLMs and specialized baselines: a fourfold docking score improvement over GPT-5.4 despite sharing the same underlying model, consistent drug property optimization gains across multiple LLM backbones where the same models yield negative improvement on raw SMILES, and state-of-the-art results on TOMG-Bench, surpassing both frontier LLMs and RePO, an RL-based molecular optimization method. Our results suggest that LLMs are already capable molecular design assistants, they simply need to be probed in the right language, through chemically meaningful representations and biologically grounded structural context.
Code is available at 
\url{https://anonymous.4open.science/status/MolLingo-7450}.

\end{abstract}
\vspace{-1em}
\section{Introduction}
\begin{figure}[!hbt]
 \centering
 \includegraphics[width=0.9\linewidth]{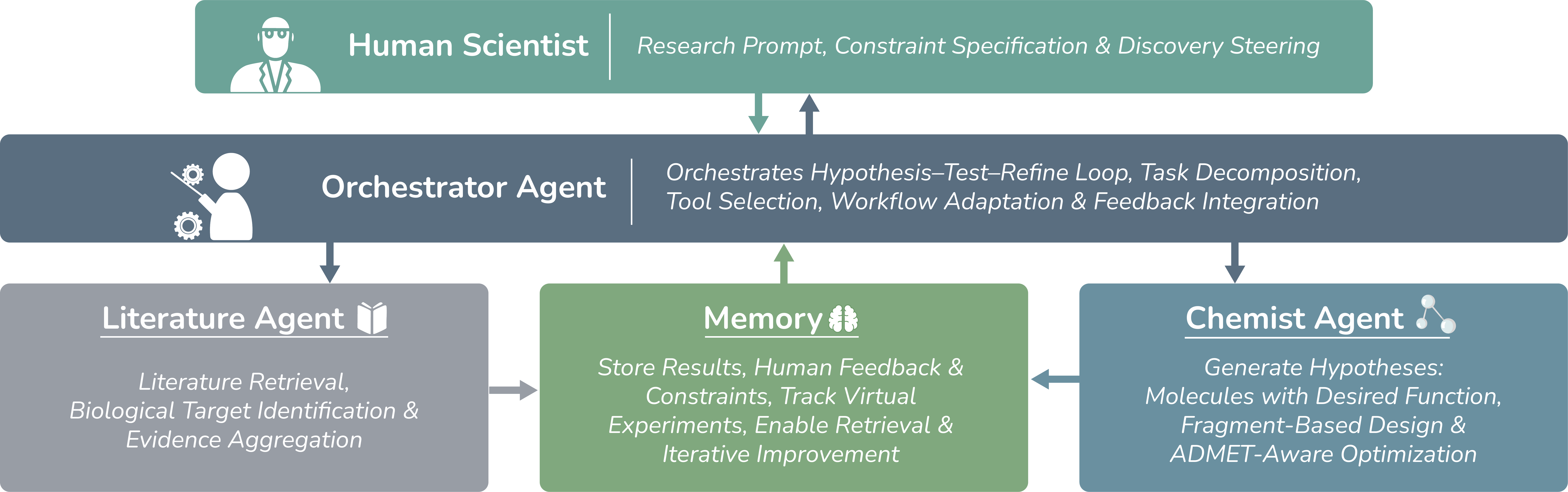}
 \caption{MolLingo agent-based architecture.}
 \label{fig:teaser}
 
\end{figure}

Molecular design is central to progress in medicine, clean energy, and advanced materials, and lies at the heart of addressing many of today's global challenges.
It requires autonomous systems capable of integrating heterogeneous biological and physical data, chemical constraints, and insights from the scientific literature. Traditional computational pipelines often fall short in this regard, as they are typically designed as static, one-directional workflows that lack the adaptability needed to incorporate new data or expert feedback~\citep{sadybekov2023computational}. 

To address these limitations, we introduce MolLingo, an autonomous multi-agent system designed to function as an expert collaborator in the molecular design process. By coordinating an Orchestrator Agent for workflow management, a Literature Agent for retrieving scientific knowledge, and a Chemist Agent for LLM-based molecular reasoning through a shared memory module that persists knowledge and reasoning traces across the discovery workflow, MolLingo provides a goal-driven framework capable of navigating complex chemical, biological and physical spaces with minimal human intervention.

A key strength of MolLingo lies in its ability to leverage the emergent chemical intuition of LLMs. Trained on vast corpora of scientific text, models such as GPT~\citep{achiam2023gpt} exhibit reasoning patterns that resemble those of chemists~\citep{boiko2023autonomous, bran2023chemcrow}. However, a critical bottleneck remains in the representation of molecular structures. Most existing approaches rely on SMILES (Simplified Molecular Input Line Entry System) strings~\citep{weininger1988smiles}, which are often lengthy, syntactically fragile, and structurally opaque to transformer-based architectures~\citep{guo2023can, ucak2023reconstruction}. Furthermore, while the space of possible molecule-level SMILES strings is effectively unbounded
, the space of chemically meaningful building blocks such as functional groups, ring systems, and pharmacophoric fragments is finite and limited. As a result, block-level entities appear far more frequently in LLM pretraining corpora than full molecule SMILES, allowing LLMs to develop stronger parametric knowledge at the building block level.

In light of this, MolLingo introduces an alternative molecular representation to raw SMILES. The system decomposes molecular graphs into discrete, chemically meaningful building blocks, each represented as a common chemical name paired with its block-level SMILES string (Fig.~\ref{fig:tokenizer}c). This representation directly leverages two properties of LLM parametric knowledge: semantic understanding of named chemical entities, and prior knowledge with short, syntactically simple SMILES fragments.
These block-based representations enable the LLM to reason at a higher level of abstraction, treating molecular design as \textit{the composition and modification of functional units}. This paradigm aligns more naturally with the conceptual language of chemistry~\citep{sutherland2008chemical} and shows improved performance compared to conventional generative models trained on raw SMILES sequences~\citep{unlu2025target}.
Equipped with this enhanced representation and a task-specific toolkit, MolLingo supports end-to-end molecular design workflows, demonstrated here through drug design, spanning biological target identification, hit discovery, hit-to-lead progression, and lead optimization (Fig.~\ref{fig:dd_pipeline}). Notably, for hit-to-lead progression, MolLingo further grounds the Chemist Agent's reasoning in three-dimensional structural context derived from molecular docking, enabling it to propose targeted structural modifications guided by the geometry and chemical environment of the target binding site.

Unlike pipeline-based approaches, MolLingo operates as a dynamic and interactive system rather than a fixed sequence of steps. The agents are coordinated through a shared memory module that maintains intermediate results, design rationales, and evolving objectives across the discovery process. This memory enables iterative refinement, allowing the system to revisit earlier decisions, incorporate newly retrieved knowledge, and adapt its strategy based on emerging evidence. Furthermore, MolLingo supports human-in-the-loop interaction, enabling domain experts to guide the discovery process by providing feedback, constraints, or design goals at any stage. Empirical results across four benchmarks demonstrate that providing LLMs with the right molecular representation and molecular design context is key to unlocking their potential as chemistry assistants.

\section{The MolLingo Framework}
\vspace{-0.5em}

This section describes the key components of MolLingo. We first present the overall architecture and agent coordination mechanism (\S\ref{sec:architecture}), followed by the block-based representation that equips LLMs with structure–function awareness (\S\ref{sec:smiles}). We then use drug design as a case study and detail the LLM-guided molecular design pipeline, covering hit identification, hit-to-lead progression, and lead optimization (\S\ref{sec:LLM-guided Molecular Design}).

\vspace{-0.5em}
\subsection{The MolLingo Architecture}
\label{sec:architecture}
 MolLingo coordinates three specialized agents: an Orchestrator, a Literature Agent, and a Chemist Agent, through a centralized shared memory module that persists all extracted knowledge, intermediate results, and reasoning traces across the discovery workflow. This architecture enables each agent to build upon prior findings, revisit earlier decisions, and adapt its strategy as new evidence accumulates, transforming MolLingo from a static pipeline into a dynamic, evidence-driven system.

\textbf{The Shared Memory Module} serves as the coordination hub of the MolLingo architecture. It maintains structured information (e.g., candidate molecules, docking scores, target annotations) and unstructured context (e.g., reasoning traces, literature insights, design rationales), making them accessible to all agents at every stage. Rather than passing outputs sequentially between components, agents read from and write to this shared store, functioning as a long-term memory that accumulates knowledge across the full discovery workflow and enables iterative refinement and coherent multi-step reasoning. A description of the memory schema, retrieval mechanism, and lifecycle management is provided in Appendix~\ref{app:memory}.

\textbf{The Orchestrator Agent} governs the overall workflow and coordinates interactions among specialized agents, external tools, and the shared memory module. Given an initial design objective, such as a disease target or optimization goal, it decomposes the task into subgoals (e.g., target identification, hit discovery, lead optimization), selects the appropriate agent at each stage, and integrates outputs into the shared memory for downstream use. The Orchestrator operates in a dynamic, goal-driven manner, adapting the workflow based on intermediate results and newly acquired knowledge. It also supports human-in-the-loop interaction, allowing domain experts to provide feedback or high-level constraints at any stage, which are incorporated into subsequent planning steps. The decision-making algorithm and prompt template of the Orchestrator are detailed in Appendix~\ref{app:orchestrator}.

\textbf{The Literature Agent} is an LLM that employs retrieval-augmented generation to retrieve and synthesize domain knowledge from scientific literature and curated biomedical sources, including PubMed, UniProt, and ChEMBL. Given a task or target of interest (for example, identifying the biological target for a given disease), it identifies relationships between diseases, genes, proteins, pathways, and known bioactive compounds, distilling them into structured insights that guide target selection, hit discovery, and molecular optimization. All extracted knowledge is written to the shared memory module, where it informs the reasoning of both the Orchestrator and Chemist Agent throughout the workflow.

\textbf{The Chemist Agent} is responsible for molecular reasoning and compound design across the drug discovery pipeline. For \textbf{hit discovery}, it retrieves candidate binders from protein–ligand binding databases such as ChEMBL~\citep{gaulton2012chembl}, or performs fragment screening using DrugCLIP~\citep{gao2023drugclip} when experimental data is unavailable. In the \textbf{hit-to-lead} stage, it performs \textit{context-aware fragment growing} by leveraging structural information from protein–ligand docking to propose chemically valid modifications at specific atomic positions. During \textbf{lead optimization}, it iteratively refines candidate molecules to improve binding affinity and drug-like properties, guided by docking evaluations and a suite of property assessment tools including property prediction (Absorption, Distribution, Metabolism, Excretion, and Toxicity), synthesizability estimation, and drug-likeness scoring. Throughout this process, the Chemist Agent reads from and writes to the shared memory, grounding its LLM-based reasoning in accumulated structural and biological context.
Fig.~\ref{fig:teaser} provides an overview of the MolLingo architecture.

\vspace{-0.5em}
\subsection{Block-based SMILES for Structure–Function Awareness}
\label{sec:smiles}
A fundamental challenge in applying LLMs to molecular design is the mismatch between SMILES and the semantic space of language models. Standard SMILES strings encode molecular structure as a flat character sequence, producing representations that are syntactically fragile, structurally opaque, and misaligned with the chemical concepts encoded in LLM pretraining corpora. A single character substitution can render a molecule invalid, and functionally meaningful substructures, such as aromatic rings or pharmacophoric groups, are scattered across non-contiguous character spans with no explicit boundaries (Fig.~\ref{fig:tokenizer}a,b).

To address this, we introduce a block-based representation that decomposes molecular graphs into discrete, chemically meaningful building blocks, producing sequences that are better aligned with the semantic space of LLMs. A molecule is represented as an ordered sequence of functional units (such as functional groups, linkers, and pharmacophoric fragments) each expressed as a block-level SMILES string paired with its common chemical name (Fig.~\ref{fig:tokenizer}c). This representation is constructed in three steps:

\textbf{(1) Vocabulary Construction via BRICS-based Fragment Enumeration (BFE).} We first train a tokenizer over a large corpus of drug-like molecules using BRICS-based Fragment Enumeration, detailed in Appendix~\ref{sec:bfe}. As a first step, we apply BRICS~\citep{degen2008art} to decompose each molecule into small, synthetically accessible fragments by breaking at chemically valid single bonds. These BRICS fragments, rather than individual atoms, then serve as the primitive units of our vocabulary, analogous to characters in text BPE. Starting from these fragment primitives, the standard approach for graph-based vocabulary construction iteratively merges the most frequent adjacent fragment pairs across the corpus until a target vocabulary size is reached. However, this requires $(V - n_0)$ full corpus passes, where $V$ is the target vocabulary size and $n_0$ is the initial atomic vocabulary size, with expensive graph merge operations at each step, making it costly at the scale of 20M molecules. BFE instead enumerates all valid fragments in a single corpus pass by exhaustively breaking each molecule at pairs of BRICS bonds, achieving a $O\big((V - n_0)\,\bar{n}\big)$ speedup over iterative graph-based construction, where $\bar{n}$ is the average number of blocks per molecule (see Appendix~\ref{sec:bfe} for the full complexity analysis). The result is a vocabulary of chemically recurring substructures that reflect the structural regularities of drug-like chemical space, with the guarantee that all building blocks correspond to synthetically accessible fragments.

\textbf{(2) Molecular Fragmentation.} Given a new molecule, we decompose it into a sequence of building blocks, using the learned vocabulary to guide the selection of the optimal decomposition. We first identify all BRICS bonds in the molecule and enumerate all possible ways to break it into non-branching sequences of blocks, yielding a set of candidates $\mathcal{T}$. Each candidate is a sequence $[t_1, \ldots, t_k]$ where each $t_i$ is a building block. Since a molecule can be decomposed in multiple ways (for example, $[t_1 t_2, t_3, t_4]$ or $[t_1, t_2 t_3, t_4]$), we select the optimal fragmentation hierarchically, prioritizing longer blocks first. Among all candidates, we retain those with the fewest blocks (i.e., longest blocks), as longer blocks carry richer structural and functional information. Among these, we keep only candidates whose blocks all appear in the vocabulary with frequency above a minimum threshold $f_{\min}$. If multiple candidates satisfy this condition, we select the one with the lowest standard deviation of block frequencies, favoring decompositions where blocks are uniformly well-represented in the vocabulary:
\[
t^* = \underset{t \in \mathcal{T}^*}{\arg\min}\ 
\text{std}\left(\{freq(t_i)\}_{i=1}^{k}\right)
\]
where $\mathcal{T}^*$ is the set of valid candidates satisfying both the minimum block count and frequency threshold criteria, and $freq(\cdot)$ is the corpus frequency of a block. The full fragmentation procedure is provided in Algorithm~\ref{alg:tokenization}.

\textbf{(3) Block-based SMILES with common names.}
Once the optimal decomposition is selected, each block is converted to 
a SMILES string with wildcard atoms at its attachment points. Since 
BRICS bonds are single bonds and each block has at most two neighbors 
in a non-branching decomposition, each block carries at most two 
wildcard atoms, labeled by isotope: \texttt{[1*]} for the left 
attachment point and \texttt{[2*]} for the right. The connection rule 
is simple and consistent: \texttt{[1*]} of each block always connects 
to \texttt{[2*]} of the next block and vice versa. Fig.~\ref{fig:tokenizer} 
provides an example of the full fragmentation pipeline applied to 
imatinib. The molecule is fragmented into blocks:
$\texttt{[2*]c1cccnc1}, \texttt{[2*]Nc1nccc([1*])n1}, \texttt{[2*]c1ccc(C)c([1*])c1},$
$\texttt{[1*]NC(=O)c1ccc([2*])cc1}, \texttt{[1*]CN1CCN(C)CC1}$.
Stripping the wildcard atoms from each block yields its parent 
scaffold, which can be mapped to a common chemical name: pyridine, 
2-aminopyrimidine, toluene, benzamide, and piperazine respectively. The 
final molecular representation thus takes the form of a sequence of 
block SMILES paired with their common names, for example:

\vspace{-0.5em}
\[
\texttt{pyridine} \rightarrow \texttt{2-aminopyrimidine} \rightarrow 
\texttt{toluene} \rightarrow \texttt{benzamide} \rightarrow 
\texttt{piperazine}
\]
\vspace{-0.5em}

Rather than presenting the model with a long, syntactically opaque SMILES string, this block-based representation decomposes the molecule into named, chemically meaningful units that are well-represented in LLM pretraining corpora, directly aligning molecular structure with the semantic space of language models.


\begin{figure}
\centering

\begin{minipage}{0.38\linewidth}
 \captionof{figure}{Molecule representation methods for LLMs. (a) The example molecule is fragmented into functional blocks, each contributing to specific chemical, physical or biological properties. (b) SMILES representation, used in most agentic drug discovery systems. Functional blocks are fragmented across the SMILES, limiting LLMs’ ability to link structure and function. (c) Our block-level SMILES representation encodes molecules as sequences of chemically meaningful blocks, labeled by both name and SMILES, enabling LLMs to reason over structure–function relationships effectively.}
 \label{fig:tokenizer}
\end{minipage}
\hfill
\begin{minipage}{0.6\linewidth}
 \centering
 \includegraphics[width=\linewidth]{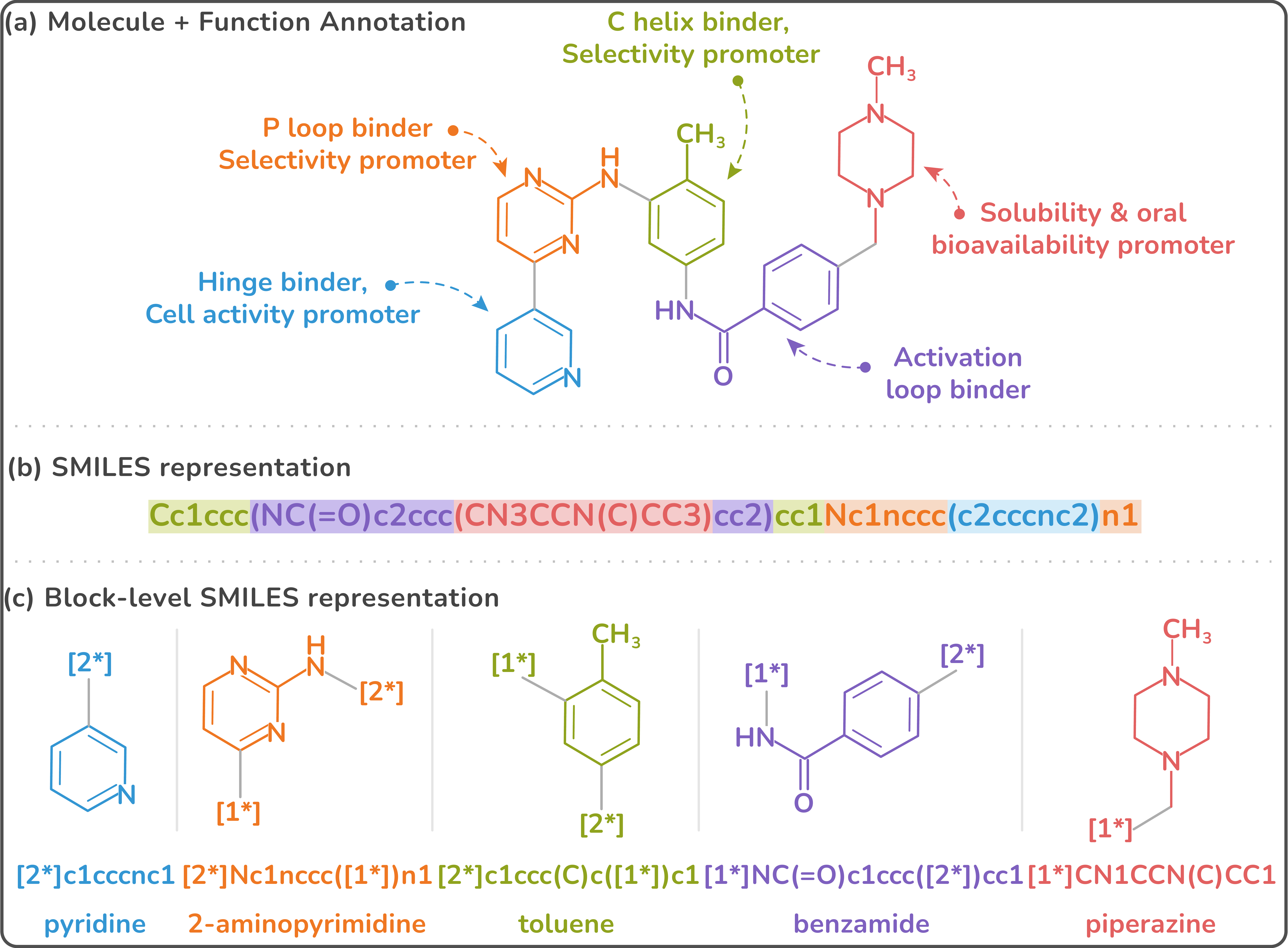}
\end{minipage}
\end{figure}



To validate that block-based representation genuinely improves structural alignment with LLM semantic space, we probe the attention mechanism of Qwen2-Instruct-7B~\citep{qwen2} under two input conditions: raw SMILES and block-based SMILES paired with common names. For the same molecule, we construct two sentences using the same template: a molecular representation followed by a description of a biological function known to be associated with a specific block, and compute the average attention between all tokens belonging to each block and the tokens describing that function. As shown in Fig.~\ref{fig:attention_probing}, when the input is raw SMILES, attention scores are distributed randomly across tokens with no meaningful structure–function correspondence. In contrast, block-based representation produces strongly localized attention: the highest attention is observed between the tokens of the relevant block and the words describing its associated function. This confirms that block-based representation enables LLMs to form explicit associations between molecular substructures and their chemical functions (a capability that is absent with raw SMILES input).

\vspace{-0.4em}
\subsection{LLM-guided Molecular Design}
\vspace{-0.5em}
\label{sec:LLM-guided Molecular Design}
 MolLingo approaches molecular design as a reasoning task grounded in chemistry and biology knowledge. Rather than treating molecular optimization as a purely statistical generation process, MolLingo leverages the Chemist Agent to reason over block-based representations, structural context from protein--ligand interactions, and accumulated knowledge from the shared memory module. This section describes these three stages of the drug design pipeline: hit identification, where candidate binders are retrieved or screened against a target protein (\S\ref{sec:hit}); hit-to-lead progression, where promising hits are expanded through context-aware fragment growing (\S\ref{sec:h2l}); and lead optimization, where candidate molecules are iteratively refined to improve binding affinity and drug-like properties (\S\ref{sec:lead}).

\vspace{-0.4em}
\subsubsection{Hit Identification}
\vspace{-0.5em}
\label{sec:hit}
The goal of hit identification is to find an initial set of molecules that bind to the target protein with sufficient affinity to serve as starting points for further optimization. MolLingo pursues two complementary strategies depending on data availability. When experimental binding data are available, the Chemist Agent queries protein--ligand binding databases such as ChEMBL~\citep{gaulton2012chembl} to retrieve known binders for the target protein. Retrieved candidates are ranked by binding affinity and drug-likeness score (QED~\citep{bickerton2012quantifying}), balancing potency with molecular quality before being passed to the hit-to-lead stage. When experimental data are unavailable (as is common for novel or understudied targets), MolLingo performs ligand-based virtual screening over a curated fragment library using DrugCLIP~\citep{gao2023drugclip}, a contrastive learning model that embeds proteins and ligands into a shared semantic space, enabling efficient similarity-based retrieval without explicit docking. The fragment library used for screening is described in Appendix~\ref{sec:fragment-library}. The top-ranked fragments are then forwarded to the hit-to-lead stage as initial hits for further elaboration.

\vspace{-0.4em}
\subsubsection{Hit-to-Lead: Context-Aware Fragment Growing}
\vspace{-0.5em}
\label{sec:h2l}

The hit-to-lead stage transforms an initial hit fragment into a more potent and drug-like lead compound by progressively growing the fragment within the binding site of the target protein. Unlike conventional fragment growing approaches that rely on exhaustive enumeration or purely data-driven generation, MolLingo performs context-aware fragment growing by combining structural information from molecular docking with the chemical reasoning capabilities of the Chemist Agent. This allows the system to propose chemically meaningful extensions at atomic positions where new fragments are most likely to form favorable interactions with the binding site, mimicking the reasoning process of a medicinal chemist who considers both the geometry of the pocket and the chemical nature of nearby residues when deciding where and how to grow a fragment.

Given a hit fragment $F$ and a protein target $P$, we first perform molecular docking to obtain the binding pose of $F$ within the active site of $P$, saving the resulting receptor and ligand coordinates as structures $R$ and $L$, respectively. We then analyze the docking pose to identify the most promising atomic positions for fragment growing, which we refer to as \textit{growth hotspots}. 
For each heavy atom $a_i \in \mathcal{A}_L$ of the docked ligand, we characterize two complementary properties of its local environment: the neighboring residues and the available space. Neighboring residues $\mathcal{R}_i$ are defined as all receptor residues containing at least one heavy atom within a contact threshold $d_c = 7.0$~\AA\ of $a_i$:
\[
\mathcal{R}_i = \{ r \in R \mid \exists\, a \in r : \|a_i - a\| \leq d_c \}.
\]

The available volume $V_i$ is estimated by constructing a cubic grid 
$\mathcal{G}_i$ centered at $a_i$ with edge length $5.0$~\AA\ and 
resolution $0.5$~\AA, and counting grid points that satisfy van der 
Waals clearance constraints with respect to both the receptor and the 
ligand:
\[
V_i = n_i \times (0.5)^3,
\]
where
$
n_i = \big|\{ g \in \mathcal{G}_i \mid d_R(g) > 2.2~\text{\AA} 
\;\wedge\; d_L(g) > 1.2~\text{\AA} \}\big|,
$
and $d_R(g)$, $d_L(g)$ are the minimum distances from grid point $g$ 
to any heavy atom in the receptor and ligand respectively. The threshold of $2.2$~\AA\ for the receptor approximates the van der Waals radius of a carbon atom ($\sim$1.7~\AA) plus a small probe radius, reflecting the minimum clearance required to avoid steric clash with receptor atoms~\citep{bondi1964van}. The threshold of $1.2$~\AA\ for the ligand is set to the approximate van der Waals radius of a hydrogen atom, ensuring that grid points immediately adjacent to the existing ligand are excluded while preserving attachment-accessible space at ligand periphery.

Together, the atom type ($\text{type} (a_i)$), available volume ($V_i$), and neighboring residues ($\mathcal{R}_i$) define the \textit{biological context} of each potential growth position. Hotspots are ranked by $V_i$ in descending order by available volume, and the top-$k$ candidates are retained.
For each hotspot, the biological context ($\text{type}(a_i)$, $V_i$, $\mathcal{R}_i$) is encoded into a natural language prompt and passed to the Chemist Agent. Rather than asking the LLM to propose molecule modifications given only the protein target, this design grounds the reasoning process in the three-dimensional structural environment of the binding site, turning the Chemist Agent into a spatially-aware reasoner that accounts for pocket geometry, steric accessibility, and the chemical nature of surrounding residues when proposing structural extensions. This enables the LLM to reason about fragment growing in a manner analogous to a medicinal chemist. The full hotspot identification and fragment growing pipeline is illustrated in Fig.~\ref{fig:molecule_growing}, and pseudocode is provided in Appendix~\ref{appendix:h2l}.
\begin{figure}[!hbt]
 \centering
 \includegraphics[width=\linewidth]{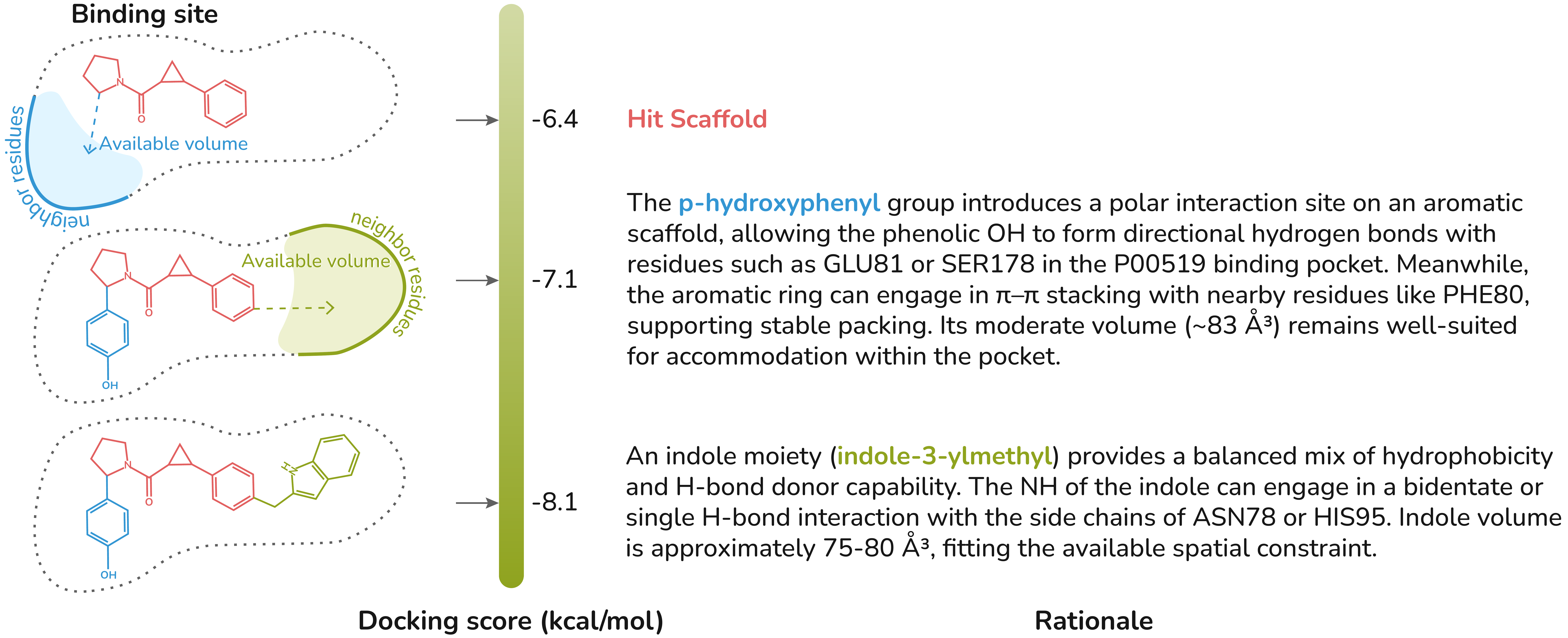}
 \caption{\textbf{Hit-to-Lead Optimization.} Starting from a hit scaffold and its docking pose with the target protein, the available volume within the binding site and neighboring amino acid residues are identified. Using this structural and biological context, the LLM reasons over fragment growth to iteratively expand the scaffold into a full molecule optimized for binding affinity.}
 \label{fig:molecule_growing}
\end{figure}

\subsubsection{Lead Optimization}
\vspace{-0.5em}
\label{sec:lead}

Lead optimization refines a promising lead compound into a drug candidate by improving its ADMET properties while preserving sufficient structural similarity to the original molecule to maintain binding affinity. MolLingo performs this iteratively through the Chemist Agent, which operates directly on the block-based representation introduced in \S\ref{sec:smiles}.
At each optimization step, the Chemist Agent is provided with two inputs: the block-based representation of the current lead molecule, and its ADMET profile as predicted by the oracle models described in Appendix~\ref{sec:admet-oracle}. Conditioned on this information, the agent reasons about which blocks are most likely responsible for unfavorable ADMET properties and proposes local modifications: replacing a building block with a chemically similar alternative, or substituting, adding, or removing individual functional groups within a block, to address the identified liabilities. By operating at the block level rather than the atom level, the agent naturally produces chemically meaningful and synthetically accessible modifications, while the block-based representation ensures that proposed changes remain interpretable and grounded in the LLM's chemical knowledge.

To maintain binding affinity throughout optimization, the Chemist Agent is instructed to propose conservative modifications that keep the optimized molecule sufficiently similar to the original lead, such that the binding mode to the target protein is likely preserved. Scaffold changes are permitted but discouraged unless clearly motivated by the ADMET profile, and the degree of structural deviation is monitored via molecular similarity metrics across iterations.
Each proposed molecule is evaluated by the ADMET oracle models, and the reasoning trace (including the identified liabilities, proposed modifications, and predicted outcomes) is stored in the shared memory module. This allows the Chemist Agent to build on prior reasoning steps in subsequent iterations, avoiding redundant modifications and progressively refining the molecule toward a favorable ADMET profile. After a fixed number of iterations, the molecule with the highest ADMET score is selected as the final drug candidate.

\vspace{-0.4em}
\section{Experimental Results}
\vspace{-0.5em}
\label{results}

We evaluate MolLingo across four benchmarks targeting distinct stages of the drug discovery pipeline. (i) \textbf{ADMET optimization}: we evaluate block-based molecular editing for toxicity reduction on the Withdrawn 2.0 database of clinically failed drugs, comparing against frontier LLMs operating on raw SMILES. (ii) \textbf{Hit-to-lead fragment growing}: we assess whether docking-guided biological context improves binding affinity over frontier LLMs, evaluated on 30 diverse protein targets. (iii) \textbf{Molecular property optimization on TOMG-Bench}: we compare against a broad set of LLMs and the specialized method RePO on LogP, QED, and MR optimization. (iv) \textbf{Hit identification against DrugGEN}: we compare retrieved hit quality against a GAN-based generative model on CDK2 and AKT1. Additionally, we evaluate the Literature Agent's biological target identification capability on 30 diverse diseases (results are reported in Appendix~\ref{app:target-identification}). Benchmarks (iii), (iv), and target identification are reported in Appendix~\ref{benchmarking} due to page limitations.

\vspace{-0.4em}
\subsection{Block-Based Representation for ADMET Optimization}
\vspace{-0.5em}
To evaluate the effectiveness of MolLingo's block-based molecular representation for lead optimization, we benchmark against a set of baseline models that operate on raw SMILES strings. All models are provided with the same input: the SMILES of the molecule to be optimized and its ADMET profile, and are tasked with proposing modifications to improve the target property while maintaining structural similarity to the original molecule. 

\textbf{Dataset.} We use the Withdrawn 2.0 database~\citep{gallo2024withdrawn}, a curated collection of 602 drugs withdrawn from the market with documented reasons for withdrawal. Among these, 91 are labeled as withdrawn due to DILI (drug-induced liver injury) and 33 due to hERG cardiotoxicity, providing a realistic and clinically grounded benchmark for ADMET optimization. To maximize the reliability of our oracle models as evaluation metrics, we first measure their true positive rate on these samples and restrict evaluation to molecules that are correctly identified by the oracle. This yields 67 out of 91 DILI cases (true positive rate: 73.6\%) and all 33 hERG cases (true positive rate: 100\%), on which we report optimization results, with no data overlap between these samples and the training DILI and hERG datasets (TDC’s).

\textbf{Baselines.} We compare MolLingo against a representative set of frontier LLMs prompted to perform molecular optimization directly on raw SMILES strings: GPT-5.4, Claude-4.6-Sonnet, Gemini-3-Pro, Qwen2-7B-Instruct, and Qwen2-14B-Instruct. These baselines are chosen to span a range of model scales and families, providing a comprehensive picture of how well general-purpose LLMs can perform molecular optimization without block-based representations. The prompt used for all baseline models is provided in Appendix~\ref{app:prompt}.

\textbf{Metrics.} We report five metrics: (i) Improvement (\%), the relative change in the target ADMET property score after optimization; (ii) Similarity (\%), the Tanimoto similarity between the original and optimized molecule, measuring how conservatively the model modifies the input; (iii) Drug-likeness, the predicted QED score; (iv) Synth, the estimated synthetic accessibility score; and (v) Validity (\%), the fraction of proposed molecules that are chemically valid. 

\textbf{Results.} Table~\ref{table:dili} reports results on the DILI benchmark. All three MolLingo variants consistently outperform their corresponding raw SMILES baselines by a large margin, most strikingly for Gemini-3-Pro: the raw model yields $-46.6\%$ improvement, while MolLingo (Gemini-3-Pro base) achieves the best overall result of $18.5\%$ improvement and $45.5\%$ similarity, with an Imprv$\times$Sim score of 839.8. The same pattern holds for GPT-5.4 ($-32.1\%$ raw vs. $13.0\%$ with MolLingo) and Claude-4.6-Sonnet ($9.5\%$ raw vs. $10.2\%$ with MolLingo at substantially higher similarity: $43.1\%$ vs. $16.9\%$). This cross-model consistency strongly suggests that the benefit of block-based representation is model-agnostic rather than specific to any particular backbone. Baselines operating on raw SMILES either yield negative improvement or achieve modest gains at low similarity, confirming that unconstrained SMILES generation tends to drift toward chemically dissimilar molecules that do not preserve the pharmacophore. MolLingo achieves 100\% validity across all variants, confirming that block-based operations naturally preserve chemical validity without post-hoc filtering. Results on the hERG benchmark follow a similar trend and are reported in Appendix~\ref{app:herg-results}.


\begin{table*}[!hbt]
\centering
\caption{DILI Comparison of model performance across multiple chemical generation metrics. Bold values represent the top-performing model for each metric. Values in parentheses represent the standard deviation (std). ($\downarrow$) indicates lower is better; ($\uparrow$) indicates higher is better.}
\label{table:dili}

\setlength{\tabcolsep}{6pt}

\resizebox{\textwidth}{!}{
\begin{tabular}{lcccccc}
\toprule
\textbf{Model} & \textbf{\makecell{Improvement (\%)\\($\uparrow$)}} & \textbf{\makecell{Similarity (\%)\\($\uparrow$)}} & \textbf{\makecell{Imprv $\times$ Sim\\($\uparrow$)}} & \textbf{\makecell{Drug-likeness\\($\uparrow$)}} & \textbf{\makecell{Synth\\($\downarrow$)}} & \textbf{\makecell{Validity (\%)\\($\uparrow$)}} \\
\midrule

GPT-5.4~\citep{achiam2023gpt}
& -32.053 (11.452) 
& 12.178 (18.202) 
& -390.341
& $\mathbf{0.681 \ (0.180)}$ 
& 2.729 (0.906) 
& 95.224 \\

 MolLingo (GPT-5.4 base) 
& 13.028 (7.724)
& 36.608 (15.576)
& 476.929
& 0.597 (0.189) 
& 3.087 (0.854) 
& $\mathbf{100}$ \\

Claude-4.6-Sonnet~\citep{anthropic_claude_2024}
& 9.543 (5.079) 
& 16.940 (22.726) 
& 161.658
& 0.645 (0.188) 
& 2.877 (0.871) 
& $\mathbf{100}$ \\

 MolLingo (Claude-4.6-Sonnet base)
& 10.185 (7.365) 
& 43.126 (15.705) 
& 439.238
& 0.560 (0.086) 
& 3.1851 (0.336) 
& $\mathbf{100}$ \\

Gemini-3-Pro~\citep{google_gemini_2024}
& -46.611 (13.428) 
& 14.741 (19.514) 
& -687.093
& 0.645 (0.201) 
& 2.884 (0.670) 
& 97.711 \\

\textbf{ MolLingo (Gemini-3-Pro base)}
& \textbf{18.451 (10.326)} 
& \textbf{45.515 (16.534)} 
& \textbf{839.797}
& 0.622 (0.073) 
& 3.166 (0.349) 
& \textbf{100} \\

Qwen2-7B-Instruct~\citep{qwen2} 
& 8.621 (0.395) 
& 14.866 (4.377) 
& 128.160
& 0.568 (0.033) 
& $\mathbf{2.707 \ (0.193)}$ 
& $\mathbf{100}$ \\

Qwen2-14B-Instruct~\citep{qwen2} 
& 8.621 (0.395) 
& 14.866 (4.377) 
& 128.160
& 0.568 (0.033) 
& $\mathbf{2.707 \ (0.193)}$ 
& $\mathbf{100}$ \\

\bottomrule
\end{tabular}
}
\end{table*}

\vspace{-0.5em}
\subsection{Hit-to-Lead: Context-Aware Fragment Growing}
\vspace{-0.5em}

To evaluate MolLingo's hit-to-lead pipeline, we benchmark the docking-guided fragment growing approach against the same set of frontier LLMs, now tasked with proposing fragment growths to improve the docking score of a hit molecule against a target protein. Baseline models are provided with the hit molecule's SMILES and the target protein, and are prompted to propose structurally optimized analogs. MolLingo additionally provides the Chemist Agent with the biological context of the binding site as described in \S\ref{sec:h2l}, grounding its fragment growing proposals in the three-dimensional structure of the protein--ligand binding site.

\textbf{Dataset \& Metrics.} We evaluate on a diverse set of 30 protein targets representing distinct disease areas, spanning a broad range of therapeutic indications and protein families.
This diversity ensures that results are not biased toward any particular target class or binding site geometry.
The full list of protein targets is provided in Table~\ref{tab:proteins}.
We report five metrics: (i) Validity; (ii) Docking Score Improvement (\%), the relative improvement in docking score compared to the original hit; (iii) Similarity; (iv) Drug-likeness; (v) Synth.


\textbf{Results.} Table~\ref{table:molecule_generation_summary} reports hit-to-lead results. MolLingo achieves the highest docking score improvement (10.4\%) and 100\% validity, while maintaining high structural similarity (65.2\%) to the original hits. The high similarity score indicates that MolLingo's fragment growing is targeted, expanding the hit molecule in a structurally conservative manner rather than drifting toward chemically distant analogs. In contrast, baseline LLMs operating on raw SMILES produce either modest docking improvements with low similarity (Claude-4.6-Sonnet, Gemini-3-Pro, Qwen2-family) or higher similarity but substantially lower docking improvement (GPT-5.4). The comparison between MolLingo and GPT-5.4, which share the same underlying LLM, isolates the contribution of the docking-guided biological context: GPT-5.4 alone achieves only 2.4\% docking improvement, while MolLingo achieves 10.4\%, a more than fourfold gain. Baseline drug-likeness scores are generally higher than MolLingo's, reflecting the tendency of unconstrained SMILES generation to produce drug-like but less potent molecules that do not specifically target the binding site.

\vspace{-0.5em}
\begin{table*}[!hbt]
\centering
\caption{Hit-to-lead performance comparison across frontier LLMs. MolLingo employs docking-guided biological context, while all baselines operate directly on raw SMILES + protein name.}
\label{table:molecule_generation_summary}

\setlength{\tabcolsep}{10pt}

\resizebox{\textwidth}{!}{
\begin{tabular}{lccccc}
\toprule
& \textbf{\makecell{Validity (\%)\\($\uparrow$)}} & \textbf{\makecell{Docking score \\ improvement (\%) ($\uparrow$)}} & \textbf{\makecell{Similarity (\%)\\($\uparrow$)}} & \textbf{\makecell{Drug-likeness\\($\uparrow$)}} & \textbf{\makecell{Synth\\($\downarrow$)}} \\
\midrule

\textbf{ MolLingo (GPT-5.4 base)} 
& $\mathbf{100}$ 
& $\mathbf{10.441 (4.997)}$ 
& 65.190 (3.605) 
& 0.630 (0.076)
& 3.343 (0.179) \\

GPT-5.4~\citep{achiam2023gpt}
& 98.67 
& 2.437 (2.618) 
& $\mathbf{75.871 (8.982)}$ 
& 0.769 (0.062)
& $\mathbf{2.402 (0.178)}$ \\

GPT-4.1~\citep{achiam2023gpt} 
& 97.33 
& 2.327 (2.805) 
& 73.629 (8.375)
& 0.780 (0.060)
& 2.867 (0.164) \\

Claude-4.6-Sonnet~\citep{anthropic_claude_2024} 
& $\mathbf{100}$ 
& 4.308 (4.894)
& 16.527 (1.304)
& 0.864 (0.065)
& 2.846 (0.327) \\

Gemini-3-Pro~\citep{google_gemini_2024} 
& 94 
& -1.361 (1.135)
& 14.103 (3.707)
& $\mathbf{0.878 (0.072)}$ 
& 3.104 (0.223)\\

Qwen2-14B-Instruct~\citep{qwen2} 
& $\mathbf{100}$ 
& 4.874 (2.723)
& 20.471 (4.120)
& 0.774 (0.081)
& 2.960 (0.197)\\

Qwen2-7B-Instruct~\citep{qwen2} 
& $\mathbf{100}$ 
& 4.353 (2.756)
& 20.010 (4.121)
& 0.758 (0.080)
& 2.935 (0.193)\\

\bottomrule
\end{tabular}
}
\end{table*}
\vspace{-0.5em}


\vspace{-0.4em}
\section{Related Work}
\vspace{-0.5em}
\paragraph{Agentic Multi-Agent Systems.} Recent works have explored organizing multiple specialized agents to automate distinct stages of the drug discovery pipeline. Systems such as PharmAgents~\citep{gao2025pharmagents}, Prompt-to-Pill~\citep{li2026m}, MADD~\citep{solovev2025madd}, and Tippy~\citep{fehlis2025accelerating} coordinate agents responsible for target identification, molecule generation, property evaluation, and optimization, typically following a fixed pipeline with heuristic coordination and no multi-step reasoning over molecular design objectives. More closely related are tool-augmented LLM systems such as ChemCrow~\citep{bran2023chemcrow} and Coscientist~\citep{boiko2023autonomous}, which integrate LLMs with chemistry tools and demonstrated that tool use substantially extends LLM capabilities beyond parametric knowledge. MolLingo builds on this direction but goes further: rather than a single LLM with tools, MolLingo coordinates multiple specialized agents through a shared memory module that enables iterative, evidence-driven refinement, and grounds molecular reasoning in a block-based representation designed to align with LLM semantic space.

\vspace{-0.5em}
\paragraph{Block-Based Molecular Representations.} mCLM~\citep{edwards2025mclm} introduces a modular chemical language model that tokenizes molecules into building blocks using three synthetic coupling reactions, while
FragGPT~\citep{yue2024unlocking} and Trio~\citep{ji2025toward} decompose molecules at all BRICS bonds but do not select among possible decompositions. \citep{samanta2025fragmentnet} and \citep{shen2024graphbpe} apply 
BPE at the atom level, with no synthesizability guarantee on the 
resulting fragments, while \citep{kaushal2026fragberta} selects BRICS bond cut sites based on predefined rules rather than corpus-level statistics. BFE differs in three key respects: first, it adopts the full BRICS rule set~\citep{degen2008art}; second, rather than breaking molecules at all possible bond sites as in~\citep{yue2024unlocking} and ~\citep{ji2025toward}, BFE selects the optimal decomposition based on corpus-level block frequency statistics, producing more informative and well-represented decompositions; and third, BFE requires no fine-tuned language model and is designed to leverage the parametric knowledge of any pretrained LLM through common chemical names, making it model-agnostic and immediately applicable without additional training.



\vspace{-0.4em}
\section{Conclusion, Limitations and Future Work}
\vspace{-0.5em}
We presented MolLingo, an autonomous multi-agent system that emulates the reasoning process of a chemist across the molecular design pipeline, grounding molecular reasoning in external biological evidence, physics-based simulation, and a novel block-based molecular representation that aligns with LLM semantic space. Empirical results across four benchmarks consistently demonstrate that LLMs perform dramatically better when probed in the right way. These results suggest that the bottleneck in LLM-guided molecular design lies not in the capability of the underlying model, but in how molecules, physical and biological context are communicated to it, and that LLMs, when spoken to in the language of chemistry, are already capable chemist assistants.

Despite these results, several limitations remain. The ADMET oracle models are trained on datasets of moderate size and may not generalize to highly novel compounds. Docking with AutoDock Vina approximates binding affinity without accounting for protein flexibility, solvent effects, or entropic contributions, which may limit optimization accuracy for challenging targets. MolLingo currently operates entirely in silico, and predicted improvements in docking score and ADMET properties have not yet been validated experimentally.

Future work will explore incorporating protein flexibility through molecular dynamics or ensemble docking, and expanding the ADMET oracle suite to cover additional pharmacokinetic endpoints. Most importantly, closing the loop between computational predictions and experimental validation through active learning, where wet lab outcomes inform and refine the agent's strategies, represents the most compelling direction toward a truly autonomous drug discovery system.

\subsubsection*{Acknowledgments}

We would like to thank Zhenhailong Wang and Jeonghwan Kim for helpful discussions. This work was supported by the NSF Molecule Maker Lab Institute (MMLI), an AI Institute for Molecular Discovery, Synthesis Strategy, and Manufacturing, funded by the U.S. National Science Foundation under Awards No. 2019897 and 2505932.
\clearpage
\bibliographystyle{plainnat}
\bibliography{references}

@article{jumper2021highly,
  title={Highly accurate protein structure prediction with AlphaFold},
  author={Jumper, John and Evans, Richard and Pritzel, Alexander and Green, Tim and Figurnov, Michael and Ronneberger, Olaf and Tunyasuvunakool, Kathryn and Bates, Russ and {\v{Z}}{\'\i}dek, Augustin and Potapenko, Anna and others},
  journal={nature},
  volume={596},
  number={7873},
  pages={583--589},
  year={2021},
  publisher={Nature Publishing Group}
}

@inproceedings{lewis2020retrieval,
 author = {Patrick S. H. Lewis and
Ethan Perez and
Aleksandra Piktus and
Fabio Petroni and
Vladimir Karpukhin and
Naman Goyal and
Heinrich K{\"{u}}ttler and
Mike Lewis and
Wen{-}tau Yih and
Tim Rockt{\"{a}}schel and
Sebastian Riedel and
Douwe Kiela},
 bibsource = {dblp computer science bibliography, https://dblp.org},
 booktitle = {Advances in Neural Information Processing Systems 33: Annual Conference
on Neural Information Processing Systems 2020, NeurIPS 2020, December
6-12, 2020, virtual},
 editor = {Hugo Larochelle and
Marc'Aurelio Ranzato and
Raia Hadsell and
Maria{-}Florina Balcan and
Hsuan{-}Tien Lin},
 title = {Retrieval-Augmented Generation for Knowledge-Intensive {NLP} Tasks},
 url = {https://proceedings.neurips.cc/paper/2020/hash/6b493230205f780e1bc26945df7481e5-Abstract.html},
 year = {2020}
}

@article{tanimoto1958elementary,
  title={Elementary mathematical theory of classification and prediction},
  author={Tanimoto, Taffee T},
  year={1958},
  publisher={International Business Machines Corp.}
}

@article{trott2010autodock,
  title={AutoDock Vina: improving the speed and accuracy of docking with a new scoring function, efficient optimization, and multithreading},
  author={Trott, Oleg and Olson, Arthur J},
  journal={Journal of computational chemistry},
  volume={31},
  number={2},
  pages={455--461},
  year={2010},
  publisher={Wiley Online Library}
}

@article{bran2023chemcrow,
  title={ChemCrow: Augmenting large-language models with chemistry tools},
  author={Bran, Andres M and Cox, Sam and White, Andrew D and Schwaller, Philippe},
  journal={arXiv preprint arXiv:2304.05376},
  year={2023}
}

@article{degen2008art,
 author = {Degen, Jorg and Wegscheid-Gerlach, Christof and Zaliani, Andrea and Rarey, Matthias},
 journal = {ChemMedChem},
 number = {10},
 pages = {1503},
 title = {On the art of compiling and using'drug-like'chemical fragment spaces},
 volume = {3},
 year = {2008}
}

@article{bemis1996properties,
  title={The properties of known drugs. 1. Molecular frameworks},
  author={Bemis, Guy W and Murcko, Mark A},
  journal={Journal of medicinal chemistry},
  volume={39},
  number={15},
  pages={2887--2893},
  year={1996},
  publisher={ACS Publications}
}

@article{ertl2009estimation,
  title={Estimation of synthetic accessibility score of drug-like molecules based on molecular complexity and fragment contributions},
  author={Ertl, Peter and Schuffenhauer, Ansgar},
  journal={Journal of cheminformatics},
  volume={1},
  number={1},
  pages={8},
  year={2009},
  publisher={Springer}
}

@article{sadybekov2023computational,
  title={Computational approaches streamlining drug discovery},
  author={Sadybekov, Anastasiia V and Katritch, Vsevolod},
  journal={Nature},
  volume={616},
  number={7958},
  pages={673--685},
  year={2023},
  publisher={Nature Publishing Group UK London}
}

@article{achiam2023gpt,
  title={Gpt-4 technical report},
  author={Achiam, Josh and Adler, Steven and Agarwal, Sandhini and Ahmad, Lama and Akkaya, Ilge and Aleman, Florencia Leoni and Almeida, Diogo and Altenschmidt, Janko and Altman, Sam and Anadkat, Shyamal and others},
  journal={arXiv preprint arXiv:2303.08774},
  year={2023}
}

@article{boiko2023autonomous,
  title={Autonomous chemical research with large language models},
  author={Boiko, Daniil A and MacKnight, Robert and Kline, Ben and Gomes, Gabe},
  journal={Nature},
  volume={624},
  number={7992},
  pages={570--578},
  year={2023},
  publisher={Nature Publishing Group UK London}
}

@article{weininger1988smiles,
  title={SMILES, a chemical language and information system. 1. Introduction to methodology and encoding rules},
  author={Weininger, David},
  journal={Journal of chemical information and computer sciences},
  volume={28},
  number={1},
  pages={31--36},
  year={1988},
  publisher={ACS Publications}
}

@article{guo2023can,
  title={What can large language models do in chemistry? a comprehensive benchmark on eight tasks},
  author={Guo, Taicheng and Nan, Bozhao and Liang, Zhenwen and Guo, Zhichun and Chawla, Nitesh and Wiest, Olaf and Zhang, Xiangliang and others},
  journal={Advances in neural information processing systems},
  volume={36},
  pages={59662--59688},
  year={2023}
}

@article{ucak2023reconstruction,
  title={Reconstruction of lossless molecular representations from fingerprints},
  author={Ucak, Umit V and Ashyrmamatov, Islambek and Lee, Juyong},
  journal={Journal of cheminformatics},
  volume={15},
  number={1},
  pages={26},
  year={2023},
  publisher={Springer}
}

@article{sutherland2008chemical,
  title={Chemical fragments as foundations for understanding target space and activity prediction},
  author={Sutherland, Jeffrey J and Higgs, Richard E and Watson, Ian and Vieth, Michal},
  journal={Journal of medicinal chemistry},
  volume={51},
  number={9},
  pages={2689--2700},
  year={2008},
  publisher={ACS Publications}
}

@article{edwards2025mclm,
  title={mclm: A function-infused and synthesis-friendly modular chemical language model},
  author={Edwards, Carl and Han, Chi and Lee, Gawon and Nguyen, Thao and Jin, Bowen and Prasad, Chetan Kumar and Szymkuc, Sara and Grzybowski, Bartosz A and Diao, Ying and Han, Jiawei and others},
  journal={arXiv preprint arXiv:2505.12565},
  year={2025}
}

@article{unlu2025target,
  title={Target-specific de novo design of drug candidate molecules with graph-transformer-based generative adversarial networks},
  author={{\"U}nl{\"u}, Atabey and {\c{C}}evrim, Elif and Yi{\u{g}}it, Melih G{\"o}kay and Sar{\i}g{\"u}n, Ahmet and {\c{C}}elikbilek, Hayriye and Bayram, Osman and Kahraman, Deniz Cansen and Ol{\u{g}}a{\c{c}}, Abdurrahman and Rifaioglu, Ahmet Sureyya and Bano{\u{g}}lu, Erden and others},
  journal={Nature Machine Intelligence},
  volume={7},
  number={9},
  pages={1524--1540},
  year={2025},
  publisher={Nature Publishing Group UK London}
}

@article{gaulton2012chembl,
  title={ChEMBL: a large-scale bioactivity database for drug discovery},
  author={Gaulton, Anna and Bellis, Louisa J and Bento, A Patricia and Chambers, Jon and Davies, Mark and Hersey, Anne and Light, Yvonne and McGlinchey, Shaun and Michalovich, David and Al-Lazikani, Bissan and others},
  journal={Nucleic acids research},
  volume={40},
  number={D1},
  pages={D1100--D1107},
  year={2012},
}

@article{gao2023drugclip,
  title={Drugclip: Contrastive protein-molecule representation learning for virtual screening},
  author={Gao, Bowen and Qiang, Bo and Tan, Haichuan and Jia, Yinjun and Ren, Minsi and Lu, Minsi and Liu, Jingjing and Ma, Wei-Ying and Lan, Yanyan},
  journal={Advances in Neural Information Processing Systems},
  volume={36},
  pages={44595--44614},
  year={2023}
}

@article{nguyen2024farm,
  title={Farm: Functional group-aware representations for small molecules},
  author={Nguyen, Thao and Huang, Kuan-Hao and Liu, Ge and Burke, Martin D and Diao, Ying and Ji, Heng},
  journal={arXiv preprint arXiv:2410.02082},
  year={2024}
}

@article{qwen2,
  title   = {Qwen2 Technical Report},
  author  = {Team Qwen},
  journal = {arXiv preprint arXiv:2407.10671},
  year    = {2024},
  url     = {https://arxiv.org/abs/2407.10671}
}

@article{bickerton2012quantifying,
  title={Quantifying the chemical beauty of drugs},
  author={Bickerton, G Richard and Paolini, Gaia V and Besnard, J{\'e}r{\'e}my and Muresan, Sorel and Hopkins, Andrew L},
  journal={Nature chemistry},
  volume={4},
  number={2},
  pages={90--98},
  year={2012},
  publisher={Nature Publishing Group UK London}
}

@article{bian2018computational,
  title={Computational fragment-based drug design: current trends, strategies, and applications},
  author={Bian, Yuemin and Xie, Xiang-Qun},
  journal={The AAPS journal},
  volume={20},
  number={3},
  pages={59},
  year={2018},
  publisher={Springer}
}

@article{congreve2003rule,
  title={A'rule of three'for fragment-based lead discovery?},
  author={Congreve, Miles and Carr, Robin and Murray, Chris and Jhoti, Harren},
  journal={Drug discovery today},
  volume={8},
  number={19},
  pages={876--877},
  year={2003}
}

@article{huang2021therapeutics,
  title={Therapeutics data commons: Machine learning datasets and tasks for drug discovery and development},
  author={Huang, Kexin and Fu, Tianfan and Gao, Wenhao and Zhao, Yue and Roohani, Yusuf and Leskovec, Jure and Coley, Connor W and Xiao, Cao and Sun, Jimeng and Zitnik, Marinka},
  journal={arXiv preprint arXiv:2102.09548},
  year={2021}
}

@article{gallo2024withdrawn,
  title={Withdrawn 2.0—update on withdrawn drugs with pharmacovigilance data},
  author={Gallo, Kathleen and Goede, Andrean and Eckert, Oliver-Andreas and Gohlke, Bjoern-Oliver and Preissner, Robert},
  journal={Nucleic Acids Research},
  volume={52},
  number={D1},
  pages={D1503--D1507},
  year={2024},
  publisher={Oxford University Press}
}

@article{li2026reference,
  title={Reference-guided Policy Optimization for Molecular Optimization via LLM Reasoning},
  author={Li, Xuan and Zhou, Zhanke and Li, Zongze and Yao, Jiangchao and Rong, Yu and Zhang, Lu and Han, Bo},
  journal={arXiv preprint arXiv:2603.05900},
  year={2026}
}

@article{wang2018remol,
  title={ReMol: LLM-guided Molecular Optimization with Reinforcement Learning},
  author={Wang, Ziqing and Ding, Kaize},
  year={2018}
}

@article{li2024tomg,
  title={Tomg-bench: Evaluating llms on text-based open molecule generation},
  author={Li, Jiatong and Li, Junxian and Liu, Yunqing and Zhou, Dongzhan and Li, Qing},
  journal={arXiv preprint arXiv:2412.14642},
  year={2024},
  publisher={Dec}
}

@article{berman2000protein,
  title={The protein data bank},
  author={Berman, Helen M and Westbrook, John and Feng, Zukang and Gilliland, Gary and Bhat, Talapady N and Weissig, Helge and Shindyalov, Ilya N and Bourne, Philip E},
  journal={Nucleic acids research},
  volume={28},
  number={1},
  pages={235--242},
  year={2000},
  publisher={Oxford University Press}
}

@article{butina1999unsupervised,
  title={Unsupervised data base clustering based on daylight's fingerprint and Tanimoto similarity: A fast and automated way to cluster small and large data sets},
  author={Butina, Darko},
  journal={Journal of Chemical Information and Computer Sciences},
  volume={39},
  number={4},
  pages={747--750},
  year={1999},
  publisher={ACS Publications}
}

@article{lynch2004activating,
  title={Activating mutations in the epidermal growth factor receptor underlying responsiveness of non--small-cell lung cancer to gefitinib},
  author={Lynch, Thomas J and Bell, Daphne W and Sordella, Raffaella and Gurubhagavatula, Sarada and Okimoto, Ross A and Brannigan, Brian W and Harris, Patricia L and Haserlat, Sara M and Supko, Jeffrey G and Haluska, Frank G and others},
  journal={New England Journal of Medicine},
  volume={350},
  number={21},
  pages={2129--2139},
  year={2004},
  publisher={Mass Medical Soc}
}

@article{gao2025pharmagents,
  title={Pharmagents: Building a virtual pharma with large language model agents},
  author={Gao, Bowen and Huang, Yanwen and Liu, Yiqiao and Xie, Wenxuan and Ma, Wei-Ying and Zhang, Ya-Qin and Lan, Yanyan},
  journal={arXiv preprint arXiv:2503.22164},
  year={2025}
}

@article{li2026m,
  title={M\^{} 4olGen: Multi-Agent, Multi-Stage Molecular Generation under Precise Multi-Property Constraints},
  author={Li, Yizhan and Cloutier, Florence and Wu, Sifan and Parviz, Ali and Knyazev, Boris and Zhang, Yan and Berseth, Glen and Liu, Bang},
  journal={arXiv preprint arXiv:2601.10131},
  year={2026}
}

@inproceedings{solovev2025madd,
  title={MADD: Multi-Agent Drug Discovery Orchestra},
  author={Solovev, Gleb V and Zhidkovskaya, Alina B and Orlova, Anastasia and Gubina, Nina and Vepreva, Anastasia and Golovinskii, Rodion and Tonkii, Ilya and Dubrovsky, Ivan and Gurev, Ivan and Gilemkhanov, Dmitry and others},
  booktitle={Findings of the Association for Computational Linguistics: EMNLP 2025},
  pages={6956--6998},
  year={2025}
}

@article{fehlis2025accelerating,
  title={Accelerating drug discovery through agentic ai: A multi-agent approach to laboratory automation in the dmta cycle},
  author={Fehlis, Yao and Crain, Charles and Jensen, Aidan and Watson, Michael and Juhasz, James and Mandel, Paul and Liu, Betty and Mahon, Shawn and Wilson, Daren and Lynch-Jonely, Nick and others},
  journal={arXiv preprint arXiv:2507.09023},
  year={2025}
}

@techreport{anthropic_claude_2024,
  author      = {Anthropic},
  title       = {Claude 3 Model Card},
  institution = {Anthropic},
  year        = {2024},
  url         = {https://www.anthropic.com}
}

@techreport{google_gemini_2024,
  author      = {Google DeepMind},
  title       = {Gemini Model Technical Report / Model Card},
  institution = {Google DeepMind},
  year        = {2024},
  url         = {https://deepmind.google}
}

@article{bondi1964van,
  title={van der Waals Volumes and Radii},
  author={Bondi, A van},
  journal={The Journal of physical chemistry},
  volume={68},
  number={3},
  pages={441--451},
  year={1964},
  publisher={ACS Publications}
}

@inproceedings{yao2022react,
  title={React: Synergizing reasoning and acting in language models},
  author={Yao, Shunyu and Zhao, Jeffrey and Yu, Dian and Du, Nan and Shafran, Izhak and Narasimhan, Karthik R and Cao, Yuan},
  booktitle={The eleventh international conference on learning representations},
  year={2022}
}

@article{yue2024unlocking,
  title={Unlocking comprehensive molecular design across all scenarios with large language model and unordered chemical language},
  author={Yue, Jie and Peng, Bingxin and Chen, Yu and Jin, Jieyu and Zhao, Xinda and Shen, Chao and Ji, Xiangyang and Hsieh, Chang-Yu and Song, Jianfei and Hou, Tingjun and others},
  journal={Chemical Science},
  volume={15},
  number={34},
  pages={13727--13740},
  year={2024},
  publisher={Royal Society of Chemistry}
}

@article{ji2025toward,
  title={Toward Closed-loop Molecular Discovery via Language Model, Property Alignment and Strategic Search},
  author={Ji, Junkai and Yang, Zhangfan and Xu, Dong and Bai, Ruibin and Li, Jianqiang and Hou, Tingjun and Zhu, Zexuan},
  journal={arXiv preprint arXiv:2512.09566},
  year={2025}
}

@article{kaushal2026fragberta,
  title={FragBERTa: Exploring Fragment-based Molecular Representation Learning with SAFE},
  author={Kaushal, Neerav and Penmatsa, Ajay MNV},
  year={2026}
}

@article{samanta2025fragmentnet,
  title={FragmentNet: Adaptive Graph Fragmentation for Graph-to-Sequence Molecular Representation Learning},
  author={Samanta, Ankur and Gupta, Rohan and Misra, Aditi and Clarke, Christian McIntosh and Rajadas, Jayakumar},
  journal={arXiv preprint arXiv:2502.01184},
  year={2025}
}

@article{shen2024graphbpe,
  title={Graphbpe: Molecular graphs meet byte-pair encoding},
  author={Shen, Yuchen and P{\'o}czos, Barnab{\'a}s},
  journal={arXiv preprint arXiv:2407.19039},
  year={2024}
}

\clearpage
\appendix



\section{Background: Early-Stage Drug Discovery Pipeline}
\label{app:dd-pipeline}
Drug discovery is a lengthy and costly process spanning from initial biological hypothesis to clinical approval. The early computational stages, often referred to as the dry lab phase, are critical for narrowing down the vast chemical space to a small set of promising candidates before any wet lab experimentation begins. As illustrated in Fig.~\ref{fig:dd_pipeline}, this phase proceeds through five sequential stages. Target identification establishes the biological rationale by linking a disease phenotype to a specific protein or pathway whose modulation may yield therapeutic benefit. Hit identification screens large chemical libraries to find initial compounds (referred to as hits) that bind to the target with detectable affinity. Hit-to-lead progression elaborates the most promising hits through structural modifications to improve potency and selectivity while maintaining drug-like properties. Lead optimization further refines the lead compound to achieve a favorable balance of efficacy, selectivity, and ADMET properties. Finally, candidate selection identifies the molecule best suited for advancement into preclinical and clinical studies, based on its overall profile across all evaluated criteria. MolLingo is designed to automate and accelerate each of these computational stages through LLM-guided reasoning and multi-agent coordination, as described in the main text.

\begin{figure}[!hbt]
 \centering
 \includegraphics[width=\linewidth]{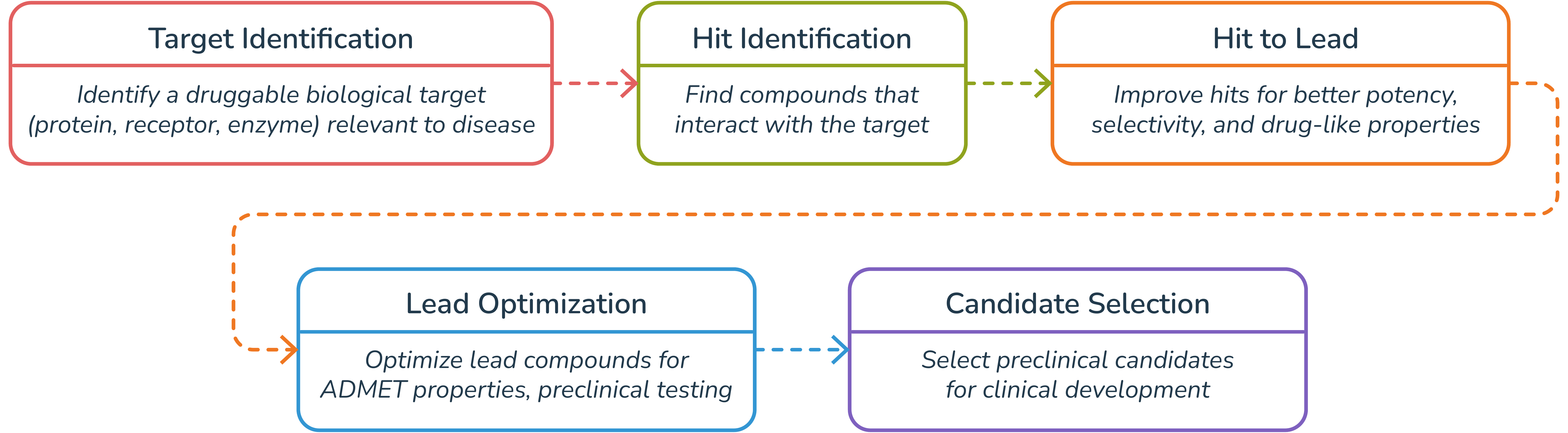}
 \caption{Overview of the early-stage computational drug discovery pipeline, covering the dry lab phases from target identification through candidate selection. Each stage progressively narrows the chemical space from thousands of initial hits to a single optimized drug candidate, guided by increasingly stringent criteria for potency, selectivity, and drug-likeness. MolLingo automates this end-to-end process through coordinated multi-agent reasoning, operating entirely in silico prior to experimental validation.}
 \label{fig:dd_pipeline}
\end{figure}

\section{BRICS-Based Fragment Enumeration (BFE)}
\label{sec:bfe}
\subsection*{Vocabulary Construction}

We train the tokenizer over a corpus of 20M drug-like molecules drawn from ChEMBL, ZINC, and industrial fragment and screening libraries introduced in~\cite{nguyen2024farm}. For each molecule, we first apply BRICS~\citep{degen2008art} to identify all synthesizable breakable bonds (single bonds whose breaking yields chemically meaningful, synthetically accessible fragments). The resulting fragments serve as the primitive units of our vocabulary — analogous to characters in text BPE — with the key distinction that each primitive is a chemically meaningful, synthetically accessible fragment rather than an arbitrary character. This ensures that all building blocks in the vocabulary correspond to realistically synthesizable units, a property that character-level fragmentation cannot guarantee.

\subsection*{The BFE Algorithm}
Unlike text BPE, which operates on linear character sequences, molecular graphs present two key challenges for direct BPE application: (i) fragments can have more than two neighbors, making it impossible to represent branched molecules as a flat sequence of blocks without auxiliary notation; and (ii) merging two molecular graphs programmatically is expensive (see Proposition~\ref{prop1}), making iterative graph-based vocabulary construction costly at corpus scale.

We resolve both challenges by inverting the BPE construction logic: rather than iteratively merging fragments, we enumerate building blocks by exhaustively breaking each molecule at all possible pairs of its BRICS bonds using \texttt{FindBRICSBonds}. By Lemma~\ref{lemma1}, breaking at exactly two bonds (treating terminal boundaries as virtual bonds) is both necessary and sufficient to recover all valid blocks. 
For example, for a molecule with sequential blocks $A$-$B$-$C$-$D$, this yields all valid fragments $\{A, B, C, D, AB, BC, CD, ABC, BCD\}$. Bond pairs that would produce branched decompositions are discarded. The frequency of each resulting fragment is accumulated in a single pass over the full corpus, producing a vocabulary of building blocks ranked by their prevalence in drug-like chemical space, at a total cost of $O(|\mathcal{C}| \cdot \bar{n}^3)$, which is faster than iterative graph-based vocabulary construction by a factor of $O\big((V - n_0)\,\bar{n}\big)$ (where $V$ is the target vocabulary size, $n_0$ is the initial atomic vocabulary size, and $\bar{n}$ is the average number of blocks per molecule) as formally derived in Corollary~\ref{cor:corpus}. The pseudocode for vocabulary construction is provided in Algorithm~\ref{alg:vocab}.

\begin{lemma}[Enumeration recovers all possible blocks]
\label{lemma1}
Under the no-branch assumption, the molecule $M$ is a path graph. Any 
contiguous sub-sequence $[b_i, \ldots, b_j]$ is a valid block and can 
be obtained by breaking exactly two bonds, where terminal boundaries of 
the molecule are treated as virtual bonds.
\begin{proof}
Let $M = [b_1, b_2, \ldots, b_n]$. Augment the bond set by introducing 
two virtual terminal bonds: a left boundary bond $(b_0, b_1)$ and a 
right boundary bond $(b_n, b_{n+1})$, where $b_0$ and $b_{n+1}$ are 
sentinel nodes. Define the augmented bond set:
\[
\tilde{B} = \{(b_0, b_1)\} \cup \{(b_k, b_{k+1})\}_{k=1}^{n-1} 
\cup \{(b_n, b_{n+1})\}
\]
Then any contiguous sub-sequence $[b_i, \ldots, b_j]$ is recovered by 
breaking exactly two bonds from $\tilde{B}$:
\[
S = \{(b_{i-1}, b_i),\ (b_j, b_{j+1})\}, \quad |S| = 2 \quad 
\forall\, 1 \leq i \leq j \leq n.
\]
Since every choice of $1 \leq i \leq j \leq n$ yields a distinct 
contiguous sub-sequence via exactly this bond pair, the enumeration 
over all $S \subseteq \tilde{B}$ with $|S| = 2$ is complete.
\end{proof}
\end{lemma}







\begin{proposition}[Per-molecule computational cost]
\label{prop1}
Let $M = [b_1, \ldots, b_n]$ be a molecule with $n$ blocks. Let $C_{\text{break}} = O(n)$ denote the cost of one RDKit bond-breaking operation, and $C_{\text{merge}} = O(n^2)$ denote the cost of one RDKit fragment-merging operation, where the $O(n^2)$ term arises from sanitization (valence checking, aromaticity perception, and ring detection).

\begin{enumerate}[label=(\roman*)]
 \item \textbf{BFE:}
 \[
 T_{\text{BFE}} = \binom{n+1}{2} \cdot C_{\text{break}}
 = \frac{n(n+1)}{2} \cdot O(n)
 = O(n^3).
 \]

 \item \textbf{Graph BPE:}
 \[
 T_{\text{GraphBPE}} = \frac{n(n-1)}{2} \cdot C_{\text{merge}}
 = \frac{n(n-1)}{2} \cdot O(n^2)
 = O(n^4).
 \]
\end{enumerate}

\begin{proof}

\textbf{(i) BFE.} By Lemma 1, each valid block corresponds to a unique pair of bonds in $\tilde{B}$ with $|\tilde{B}| = n+1$. Hence, the number of breaking operations is $\binom{n+1}{2}$. Each breaking action invokes \texttt{FragmentOnBonds} on the full molecule of $n$ atoms, which requires copying the molecular graph and removing an edge, both $O(n)$, giving $C_{\text{break}} = O(n)$. Thus:
\[
T_{\text{BFE}} = \binom{n+1}{2} \cdot O(n) = O(n^3).
\]

\textbf{(ii) Graph BPE.} Starting from $n$ blocks, at each merge step, the most frequent adjacent pair across the corpus is selected and collapsed into a single new block, reducing the block count by one. For a single molecule, this process repeats until all blocks are merged, yielding $\sum_{k=1}^{n-1} k = \frac{n(n-1)}{2}$ merge operations in the worst case. Each operation involves \texttt{CombineMols}, which scales linearly in the number of atoms and bonds, followed by bond formation and mandatory sanitization. Assuming sanitization dominates (due to, e.g., ring perception) we take $C_{\text{merge}} = O(n^2).$ Therefore:
\[
T_{\text{GraphBPE}} = \frac{n(n-1)}{2} \cdot O(n^2) = O(n^4).
\]
\end{proof}
\end{proposition}

\begin{corollary}[Per-molecule speedup]
\[
\frac{T_{\text{GraphBPE}}}{T_{\text{BFE}}} 
= \frac{\dfrac{n(n-1)}{2} \cdot O(n^2)}{\dfrac{n(n+1)}{2} \cdot O(n)}
= \frac{n-1}{n+1} \cdot O(n)
= O(n)
\]
BFE achieves an $O(n)$ per-molecule speedup over Graph BPE, owing to 
the cheaper $O(n)$ cost of \texttt{FragmentOnBonds} compared to the 
$O(n^2)$ cost of merge-and-sanitize. The efficiency advantage further 
compounds at the corpus level, as described in Corollary~\ref{cor:corpus}.
\end{corollary}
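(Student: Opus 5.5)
The plan is to divide the two expressions from Proposition~\ref{prop1} directly, keeping the exact combinatorial prefactors separate from the asymptotic per-operation costs. The reason is that a naive ratio of two big-$O$ bounds is not meaningful. So I will first restate Proposition~\ref{prop1} in explicit constant form. I write $C_{\text{break}} = \Theta(n)$ and $C_{\text{merge}} = \Theta(n^2)$; this means I read the cost model as tight (the graph copy in \texttt{FragmentOnBonds} is linear, and sanitization is quadratic in the worst case), not merely as an upper bound. With constants $c_1, c_2 > 0$ this gives
\[
T_{\text{BFE}} = \binom{n+1}{2}\, c_1 n\,(1+o(1)), \qquad T_{\text{GraphBPE}} = \binom{n}{2}\, c_2 n^2\,(1+o(1)).
\]
The operation counts $\binom{n+1}{2}$ and $\binom{n}{2}$ are exact. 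The first comes from Lemma~\ref{lemma1}, which gives a bijection between blocks and pairs in $\tilde{B}$ with $|\tilde{B}| = n+1$. The second comes from the worst-case merge count in Proposition~\ref{prop1}(ii).

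Next I take the quotient. The factor $\tfrac{1}{2}n$ cancels, leaving $\frac{n-1}{n+1}\cdot \frac{c_2}{c_1}\, n\,(1+o(1))$. Since $\frac{n-1}{n+1} \in [\tfrac{1}{3}, 1)$ for $n \ge 2$ and tends to $1$, this factor is $\Theta(1)$. Hence the ratio is $\Theta(n)$, and in particular $O(n)$, as claimed. The case $n = 1$ is degenerate: there are no merges and Graph BPE costs $0$. I would exclude it, or note that the statement concerns large $n$.

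The main obstacle is the legitimacy of dividing asymptotic costs. If $C_{\text{break}}$ were only an upper bound $O(n)$, the ratio could be larger than claimed. If $C_{\text{merge}}$ were only an upper bound, the speedup could be smaller, even $O(1)$. I would handle this by stating explicitly that Proposition~\ref{prop1} is read with matching lower bounds. This is justified because every BFE break copies the whole $n$-atom graph, which costs $\Omega(n)$, and because the worst-case sanitization (ring perception) is $\Omega(n^2)$. Under that reading the quotient is rigorous. Alternatively, the corollary can be phrased as a worst-case speedup under the stated cost model, which is what the paper's displayed computation implicitly does. Finally, I would remark that the source of the gain is the extra factor of $n$ in $C_{\text{merge}}/C_{\text{break}}$, since the operation counts are asymptotically equal. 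This prepares the corpus-level extension in Corollary~\ref{cor:corpus}.
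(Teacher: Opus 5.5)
Your proposal is correct and follows the same route as the paper: you divide the two cost expressions from Proposition~\ref{prop1}, cancel the common $\tfrac{n}{2}$, and observe that $\tfrac{n-1}{n+1}=\Theta(1)$, so the ratio is governed by $C_{\text{merge}}/C_{\text{break}}=\Theta(n)$. Your added observation is a real gain in rigor: the paper's quotient of big-$O$ terms is only meaningful if $C_{\text{break}}$ is read as $\Omega(n)$ and $C_{\text{merge}}$ as $O(n^2)$. One small point: a $\Theta$ bound gives two-sided constants, not the asymptotic form $c_1 n(1+o(1))$, but the two-sided version already yields the same $\Theta(n)$ conclusion.
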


\begin{corollary}[Corpus-level speedup]
\label{cor:corpus}
Over a corpus $\mathcal{C}$ of molecules with average block count 
$\bar{n}$, Graph BPE requires $V - n_0$ full corpus passes to reach 
a target vocabulary size $V$ from an initial atomic vocabulary of size 
$n_0$, while BFE requires exactly one pass. The total corpus-level costs are:
\[
T_{\text{BFE}}^{\text{corpus}} = |\mathcal{C}| \cdot O(\bar{n}^3)
\]
\[
T_{\text{GraphBPE}}^{\text{corpus}} = (V - n_0) \cdot |\mathcal{C}| 
\cdot O(\bar{n}^4)
\]
yielding a corpus-level speedup that scales as:
\[
\boxed{O\big((V - n_0)\,\bar{n}\big)}
\]
\end{corollary}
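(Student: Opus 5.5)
The plan is to take the per-molecule costs of Proposition~\ref{prop1} and sum them over the corpus, treating the two methods separately. The ratio of the two totals then gives the speedup. Throughout, the working assumption is that block counts concentrate around $\bar{n}$, so that $\frac{1}{|\mathcal{C}|}\sum_{M\in\mathcal{C}} n_M^k = O(\bar{n}^k)$ for $k=3,4$. This holds, for instance, if $n_M$ is bounded by a constant multiple of $\bar{n}$, which is reasonable for drug-like molecules. I would state this explicitly, since the corollary is phrased entirely in terms of $\bar{n}$.

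\textbf{BFE.} By Lemma~\ref{lemma1}, enumerating all bond pairs of $\tilde{B}$ recovers every valid block of a molecule in a single visit. Accumulating block frequencies over the corpus therefore needs only one pass. Each molecule $M$ costs $O(n_M^3)$ by Proposition~\ref{prop1}(i). Summing over $\mathcal{C}$ under the concentration assumption gives
\[
T_{\text{BFE}}^{\text{corpus}} = \sum_{M} O(n_M^3) = |\mathcal{C}|\cdot O(\bar{n}^3).
\]
The frequency-table updates are hash insertions, amortized $O(1)$ per block. There are $O(n_M^2)$ of them per molecule, so they are dominated by the breaking cost.

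\textbf{Graph BPE.} Each of the $V-n_0$ vocabulary additions requires a full corpus pass: first to recount adjacent-pair frequencies and choose the globally most frequent pair, then to apply the merge wherever it occurs. Here I adopt the paper's accounting, which charges each pass the per-molecule cost $O(n_M^4)$ of Proposition~\ref{prop1}(ii) as an upper bound on the merge-and-sanitize work that a molecule can incur. Summing gives
\[
T_{\text{GraphBPE}}^{\text{corpus}} = (V-n_0)\cdot|\mathcal{C}|\cdot O(\bar{n}^4).
\]
Dividing the two totals, the common factor $|\mathcal{C}|$ cancels and one factor of $\bar{n}^3$ cancels, leaving $O\big((V-n_0)\,\bar{n}\big)$.

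\textbf{Main obstacle.} The hard part is justifying the per-pass charge of $O(\bar{n}^4)$ for Graph BPE. A single pass actually performs at most one merge type per molecule, which is closer to $O(n_M)$ pair counting plus $O(n_M^2)$ for a merge. The $O(n^4)$ in Proposition~\ref{prop1}(ii) instead counts all merges a molecule undergoes over the entire run. I would handle this by stating that the corollary treats Proposition~\ref{prop1}(ii) as a worst-case per-pass bound, since re-sanitization and recounting may touch all prior merges. I would then note that a tighter accounting changes only the power of $\bar{n}$, not the factor $V-n_0$.

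This means the $V-n_0$ factor is robust, while the extra $\bar{n}$ factor depends on that modeling choice. I would flag this dependence as a remark rather than try to prove a matching lower bound.
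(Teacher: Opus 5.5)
This is essentially the paper's own argument. The corollary has no proof beyond the computation in its statement: multiply the per-molecule costs of Proposition~\ref{prop1} by $|\mathcal{C}|$ and by the number of passes ($1$ versus $V-n_0$), then take the ratio. Your explicit concentration assumption and your caveat are legitimate refinements the paper leaves unstated; the caveat is that the whole-run $O(n^4)$ of Proposition~\ref{prop1}(ii) is charged to every one of the $V-n_0$ passes. One small point you share with the paper: dividing two $O(\cdot)$ totals bounds the ratio only if the denominator is also bounded below. That holds here, since BFE performs exactly $\binom{n_M+1}{2}$ breaks of cost $\Omega(n_M)$ per molecule and $\sum_{M} n_M^3 \ge |\mathcal{C}|\,\bar{n}^3$ by Jensen, so $T_{\text{BFE}}^{\text{corpus}} = \Theta(|\mathcal{C}|\,\bar{n}^3)$ and the boxed expression is a valid upper bound on the speedup, though, as you note, not shown to be tight.
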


We empirically evaluate the scaling behavior of the merge-to-break time ratio as a function of molecule size. For molecules containing 10, 15, and 20 heavy atoms, the observed ratios were $11.84\times$, $15.22\times$, and $20.35\times$, respectively, averaged over 300 molecules per size. These values increase approximately linearly with the number of atoms $n$, closely tracking the expected trend. This result provides experimental confirmation that the merge-to-break cost scales linearly as $O(n)$.

For example, for values ($V = 2000$, $n_0 = 100$, $\bar{n} = 8$), the approximate speedup is:
$\text{Speedup} \approx 1900 \times 8
= 15200$. This suggests an approximate speedup on the order of $10^4\times$, though the exact factor depends on implementation constants.

\begin{table*}[!t]
\centering
\caption{Computational cost comparison between BFE vocabulary construction and Graph BPE. $n$ denotes the number of blocks per molecule, $|\mathcal{C}|$ the corpus size, $V$ the target vocabulary size, and $n_0$ the initial atomic vocabulary size.}
\label{tab:complexity}

\setlength{\tabcolsep}{15pt}

\resizebox{\textwidth}{!}{
\begin{tabular}{lccccc}
\toprule
& \textbf{\# Actions} & \textbf{Action cost} & 
\textbf{Per-molecule} & \textbf{Corpus passes} & \textbf{Total} \\
\midrule

\textbf{BFE} 
& $\dfrac{n(n+1)}{2}$ 
& $O(n)$ 
& $O(n^3)$ 
& $1$ 
& $O(|\mathcal{C}| \cdot \bar{n}^3)$ \\[10pt]

\textbf{Graph BPE} 
& $\dfrac{n(n-1)}{2}$ 
& $O(n^2)$ 
& $O(n^4)$ 
& $V - n_0$ 
& $O((V-n_0)\cdot|\mathcal{C}|\cdot\bar{n}^4)$ \\[10pt]

\textbf{Speedup} 
& $\dfrac{n-1}{n+1}$ 
& $O(n)$ 
& $O(n)$ 
& $V - n_0$ 
& $(V-n_0)\cdot O(\bar{n})$ \\
\bottomrule
\end{tabular}
}
\end{table*}

\begin{algorithm}[!hbt]
\caption{Phase 1 — Vocabulary Construction}
\label{alg:vocab}
\begin{algorithmic}[1]
\Require Corpus $\mathcal{C}$ of 20M molecules (ChEMBL, ZINC, screening and fragment libraries)
\Ensure Fragment frequency dictionary $\texttt{vocab}$

\State $\texttt{vocab} \gets \emptyset$

\For{each molecule $mol \in \mathcal{C}$}
 \State $\tilde{B} \gets \textsc{BRICS\_bonds}(mol) + \text{virtual terminal bonds}$
 \State $\texttt{bond\_sets} \gets \{ S \subseteq \tilde{B} \mid|S| = 2 \}$

 \For{each $S \in \texttt{bond\_sets}$}
 \State $\texttt{fragments} \gets \textsc{break\_molecule}(mol, S)$

 \If{$\textsc{has\_branch}(\texttt{fragments})$}
 \State \textbf{continue}
 \EndIf

 \For{\texttt{block} $\in \texttt{fragments}$}
 \State $\texttt{vocab}[\texttt{block}] \gets \texttt{vocab}[\texttt{block}] + 1$
 \EndFor
 \EndFor
\EndFor

\State \Return $\texttt{vocab}$
\end{algorithmic}
\end{algorithm}

\subsection*{Molecular Fragmentation}
Given a molecule, we decompose it into a sequence of building blocks 
from the learned vocabulary using \texttt{FragmentOnBonds}. For each 
molecule, we enumerate all valid non-branching decompositions by 
breaking all possible pairs of BRICS bonds, as in vocabulary 
construction. Each resulting decomposition is a candidate fragmentation: a sequence of blocks $[t_1, t_2, \ldots, t_k]$ where each $t_i$ 
belongs to the vocabulary.

Among all candidate decompositions, we select the optimal one hierarchically. We first prioritize decompositions with the fewest fragments (i.e., longest blocks), as longer blocks carry richer structural and functional information. Among candidates of equal length (i.e., number of blocks), we retain only those whose blocks all appear in the vocabulary with frequency above a minimum threshold $f_{\min}$. If multiple 
candidates satisfy this condition, we select the one with the lowest standard deviation of block frequencies:
\[
t^* = \underset{t \in \mathcal{T}^*}{\arg\min}\ 
\text{std}\left(\{freq(t_i)\}_{i=1}^{k}\right)
\]
where $\mathcal{T}^*$ is the set of valid candidates with the fewest fragments and all block frequencies above $f_{\min}$. In our experiments, $f_{\min}$ is set to 20, yielding a final vocabulary of approximately 1.2M building blocks. This criterion favors decompositions whose blocks are uniformly well-represented in the vocabulary, avoiding decompositions that rely on rare or potentially noisy fragments. The pseudocode for molecular fragmentation is provided in Algorithm~\ref{alg:tokenization}.

\begin{algorithm}[!hbt]
\caption{Phase 2 — Molecular Fragmentation}
\label{alg:tokenization}
\begin{algorithmic}[1]
\Require Molecule $mol$, vocabulary $\texttt{vocab}$, threshold $\texttt{min\_freq}$
\Ensure Sequence of building blocks $[t_1, t_2, \dots, t_k]$

\State $\tilde{B} \gets \textsc{BRICS\_bonds}(mol) + \text{virtual terminal bonds}$
\State $\texttt{candidates} \gets [\ ]$

\Comment{Enumerate all valid (non-branching) decompositions}
\For{each $S \subseteq \tilde{B}$ such that $|S| = 2$}
 \State $\texttt{fragments} \gets \textsc{break\_molecule}(mol, S)$
 \If{\textbf{not} $\textsc{has\_branch}(\texttt{fragments})$}
 \State $\texttt{candidates}.\textsc{append}(\texttt{fragments})$
 \EndIf
\EndFor

\Comment{Prefer fewer fragments (longer blocks)}
\State $\textsc{sort}(\texttt{candidates}, \text{key} = \textsc{len}, \text{order} = \textsc{ascending})$

\For{$\texttt{fragmentation} \in \texttt{candidates}$}
 \State $\texttt{freqs} \gets [\texttt{vocab}[t] \;\textbf{for each}\; t \in \texttt{fragmentation}]$

 \If{$\forall f \in \texttt{freqs},\; f \ge \texttt{min\_freq}$}
 \State $\texttt{valid} \gets \{c \in \texttt{candidates} \mid |c| = |\texttt{fragmentation}|\}$
 \State \Return $\arg\min_{c \in \texttt{valid}} \textsc{std}(\texttt{vocab}[t], t \in c)$
 \EndIf
\EndFor

\Comment{Fallback: finest-grained decomposition}
\State \Return $\arg\min_{c \in \texttt{candidates}} \textsc{len}(c)$

\end{algorithmic}
\end{algorithm}

\subsection*{BFE vs. BRICS Fragmentation}

To isolate the contribution of BFE's frequency-guided optimal 
decomposition selection from a naive BRICS decomposition baseline, 
we compare MolLingo using BFE fragmentation against a variant that 
simply breaks molecules at all possible BRICS bonds without 
vocabulary-guided selection (denoted BRICS). Both variants use 
Gemini-3-Pro as the base LLM and are evaluated on the DILI 
benchmark.

\begin{table}[h]
\centering
\caption{Ablation comparing BFE fragmentation against naive BRICS 
fragmentation, using Gemini-3-Pro as the base LLM on the DILI 
benchmark. Values in parentheses denote standard deviation.}
\label{tab:bfe-vs-brics}
\setlength{\tabcolsep}{5pt}
\resizebox{\columnwidth}{!}{
\begin{tabular}{lcccccc}
\toprule
\textbf{Model} & \textbf{\makecell{Improvement\\(\%)~($\uparrow$)}} & 
\textbf{\makecell{Similarity\\(\%)~($\uparrow$)}} & 
\textbf{\makecell{Imprv$\times$Sim\\($\uparrow$)}} & 
\textbf{\makecell{Drug-likeness\\($\uparrow$)}} & 
\textbf{\makecell{Synth\\($\downarrow$)}} & 
\textbf{\makecell{Validity\\(\%)~($\uparrow$)}} \\
\midrule
\textbf{ MolLingo (Gemini-3-Pro)} 
& \textbf{18.451 (10.326)} 
& \textbf{45.515 (16.534)} 
& \textbf{839.797} 
& 0.622 (0.073) 
& 3.166 (0.349) 
& \textbf{100} \\
BRICS (Gemini-3-Pro) 
& 1.034 (8.274) 
& 78.650 (15.761) 
& 81.316 
& \textbf{0.642 (0.058)} 
& \textbf{2.934 (0.341)} 
& \textbf{100} \\
\bottomrule
\end{tabular}
}
\end{table}

The results highlight a clear tradeoff. The naive BRICS baseline 
achieves higher similarity (78.7\% vs. 45.5\%), as breaking at 
all possible bonds produces finer-grained, more conservative 
decompositions. However, this comes at a significant cost to 
optimization effectiveness: BRICS achieves only 1.0\% improvement 
compared to 18.5\% for BFE, and the Imprv$\times$Sim score 
collapses from 839.8 to 81.3. This suggests that naive BRICS 
decomposition fragments molecules into units that are too small 
and too numerous for the LLM to reason about meaningfully, losing 
the structural and semantic coherence that BFE's vocabulary-guided 
fragmentation preserves. BFE's optimal selection of longer, 
frequency-validated blocks is therefore critical to enabling 
effective LLM-guided molecular optimization.

\section{Fragment Library for Hit Identification}
\label{sec:fragment-library}

\subsection*{Fragment-Based Drug Design}

Fragment-based drug design (FBDD) has emerged as a powerful paradigm for early-stage drug discovery, complementing high-throughput screening by exploring chemical space more efficiently with smaller, simpler molecules~\citep{bian2018computational}. Rather than screening large, complex compounds, FBDD begins with low-molecular-weight fragments (typically $<$ 300 Da) that bind weakly but efficiently to the target, and progressively elaborates them into drug-like leads through fragment growing, merging, or linking~\citep{bian2018computational}.

The quality of a fragment library is critical to the success of FBDD. Fragments should satisfy the ``rule of three''~\citep{congreve2003rule} (molecular weight $\leq$ 300 Da, cLogP $\leq$ 3, hydrogen bond donors $\leq$ 3, hydrogen bond acceptors $\leq$ 3), exhibit favorable ADMET properties to ensure downstream drug-likeness, and provide broad coverage of diverse chemical scaffolds to maximize the probability of identifying a hit against a novel target. Scaffold diversity is particularly important: a library enriched with structurally redundant fragments offers little advantage over a smaller one, as redundant fragments tend to bind the same pockets in the same way.

\subsection*{ MolLingo Fragment Library}

We construct a curated fragment library by aggregating commercially available fragment collections from ChemDiv\footnote{\url{https://www.chemdiv.com/}}, Enamine\footnote{\url{https://enamine.net/}}, ChemBridge\footnote{\url{https://chembridge.com/}}, Life Chemicals\footnote{\url{https://lifechemicals.com/}}, Vitas-M\footnote{\url{https://vitasmlab.biz/}}, Maybridge\footnote{\url{https://chembridge.com/}}, Asinex\footnote{\url{https://www.asinex.com/}}, Eximed\footnote{\url{https://eximedlab.com/}}, InterBioScreen\footnote{\url{https://www.ibscreen.com/}}, Targetmol\footnote{\url{https://www.targetmol.com/}}, Princeton BioMolecular\footnote{\url{https://princetonbio.com/}}, Otava\footnote{\url{https://www.otava.com/}}, and Alinda Chemical\footnote{\url{https://www.alinda.ru/synthes_en.htm}}.
The raw aggregated library contains \textbf{727,519} fragments spanning \textbf{109,778} unique Murcko scaffolds~\citep{bemis1996properties}, providing broad coverage of drug-like chemical space.

To ensure that fragments selected for virtual screening are both drug-like and ADMET-compliant, we apply a two-stage filter. Each fragment is evaluated using the ADMET scoring function and drug-likeness scoring function described in Section~\ref{sec:admet-oracle}. Fragments are retained only if they satisfy:
\[
\text{ADMET score} > 2.5 \quad \text{and} \quad 
\text{QED score} > 0.7
\]
This yields a high-quality subset of \textbf{12,337} fragments that balance broad scaffold diversity with favorable drug-like and pharmacokinetic properties. These filtered fragments constitute the screening library used by the Chemist Agent during hit identification (Section~\ref{sec:hit}). Table~\ref{tab:fragment-library} summarizes the library statistics.

\begin{table}[!hbt]
\centering
\caption{Summary statistics of the MolLingo fragment library.}
\begin{tabular}{lc}
\toprule
\textbf{Property} & \textbf{Value} \\
\midrule
Total fragments & 727,519 \\
Unique scaffolds & 109,778 \\
Fragments passing ADMET \& QED filter & 12,337 \\
ADMET threshold & $> 2.5$ \\
QED threshold & $> 0.7$ \\
\bottomrule
\end{tabular}
\label{tab:fragment-library}
\end{table}

\section{ADMET Oracle Models and Ranking Function}
\label{sec:admet-oracle}

\subsection*{ADMET Oracle Models}
Accurate prediction of ADMET properties is essential for guiding molecule optimization toward drug-like candidates and filtering out compounds with unfavorable pharmacokinetic or toxicity profiles early in the discovery process. We train a suite of ADMET oracle models on tasks from the Therapeutics Data Commons (TDC)~\citep{huang2021therapeutics} benchmark, selecting tasks where data quality and quantity are sufficient to build reliable predictive models, as indicated by their leaderboard performance. Our selection follows the protocol of mCLM~\citep{edwards2025mclm}, covering six key ADMET endpoints that collectively assess toxicity, absorption, and permeability: DILI (Drug-Induced Liver Injury), hERG cardiotoxicity, AMES mutagenicity, HIA (Human Intestinal Absorption), Pgp (P-glycoprotein inhibition), and BBB (Blood-Brain Barrier Penetration). Model architectures and training procedures are adopted from mCLM~\citep{edwards2025mclm}, yielding similar results.

\subsection*{ADMET Score and Ranking Function}
\label{admet_ranking}

Given the predicted probabilities from each oracle model, we define the ADMET score of a molecule $m$ as the sum of its favorable outcome probabilities across all tasks:
\[ \text{ADMET}(m) = (1 - p_{\text{DILI}}) + (1 - p_{\text{AMES}}) + (1 - p_{\text{hERG}}) + (1 - p_{\text{Pgp}}) + p_{\text{HIA}} \]
where for toxicity and efflux endpoints (DILI, AMES, hERG, Pgp), a lower predicted probability is favorable, so we take the complement; for HIA, a higher predicted probability indicates better oral absorption and is thus directly included. BBB penetration is excluded from the aggregate score as its desirability is target-dependent (CNS penetration is required for neurological targets but undesirable for peripherally-acting drugs) and should be assessed separately based on the therapeutic indication. The ADMET score therefore ranges from 0 to 5, and is used as a hard filter by thresholding at $\text{ADMET}(m) > 0.5 \times= 2.5$, ensuring that all candidates forwarded to subsequent stages meet a minimum safety and pharmacokinetic standard.

For the construction of the screening fragment library, fragments are filtered by the composite criterion:
\[ \text{ADMET}(m) > 2.5 \quad \text{and} \quad \text{QED}(m) > 0.7 \]
ensuring that all fragments in the library meet a minimum standard of drug-likeness and pharmacokinetic safety prior to any target-specific screening.

Given a protein target, the Chemist Agent first attempts to retrieve known binders from ChEMBL~\citep{gaulton2012chembl}. If no relevant binding data is found, or if the user explicitly requests de novo design, the agent falls back to virtual screening over the curated fragment library. In this case, hit identification proceeds in two stages. First, DrugCLIP~\citep{gao2023drugclip} is used to rank all fragments in the screening library by their predicted binding affinity to the target protein, and the top $k$ fragments are retrieved. DrugCLIP is used here as a ranking model rather than an absolute affinity predictor, exploiting its ability to efficiently screen large fragment libraries without explicit docking. Second, the top $k$ retrieved fragments are passed to a molecular docking engine, which evaluates their binding poses and scores against the target protein structure. The final hits are selected based on docking scores, providing a geometrically grounded assessment of binding complementarity to serve as starting points for the hit-to-lead stage.

\section{Hit-to-Lead: Docking-Guided Hotspot Identification and LLM-Based Fragment Growing}
\label{appendix:h2l}
Algorithm~\ref{alg:h2l} details the hit-to-lead pipeline introduced in \S\ref{sec:h2l}. The core idea is to decompose fragment growing into three phases: (i) obtaining a geometrically accurate binding pose via molecular docking; (ii) identifying the most promising atomic positions for fragment extension (referred to as hotspots) by jointly analyzing the steric availability and protein environment around each heavy atom of the docked ligand; and (iii) grounding the LLM's fragment proposals in the three-dimensional biological context of each hotspot, enabling the Chemist Agent to reason about which building blocks are most likely to form favorable interactions with the surrounding residues. This design ensures that fragment growing is not a blind generative process, but a structurally informed and chemically reasoned decision, analogous to how a medicinal chemist would analyze a docking pose before proposing a synthetic modification.

\begin{algorithm}[!hbt]
\caption{Docking-Guided Hotspot Identification and LLM-Based Fragment Growing}
\label{alg:h2l}
\begin{algorithmic}[1]
\Require Protein target $P$, hit fragment $F$, contact threshold $d_c = 7.0$ \AA, maximum hotspots $k=5$
\Ensure Ranked hotspots $\mathcal{H}$ and proposed fragment growths $\{\mathcal{B}_i\}_{i=1}^k$

\Statex \textbf{Phase 1: Molecular Docking}
\State Dock fragment $F$ into the binding site of protein $P$
\State Save the docking pose as receptor structure $R$ and docked ligand structure $L$

\Statex \textbf{Phase 2: Hotspot Identification}
\State Parse heavy atoms (non-hydrogen) from $R$ and $L$, yielding $\mathcal{A}_R$ and $\mathcal{A}_L$
\State Build spatial index $\mathcal{N}$ over $\mathcal{A}_R$

\For{each heavy atom $a_i \in \mathcal{A}_L$}

 \State Identify neighboring residues:
 \[
 \mathcal{R}_i = \{ r \in R \mid \exists\, a \in r : \|a_i - a\| \leq d_c \}
 \]

 \State Estimate available volume around $a_i$
 \State Construct cubic grid $\mathcal{G}_i$ centered at $a_i$ (edge $5.0$ \AA, resolution $0.5$ \AA)

 \For{each grid point $g \in \mathcal{G}_i$}
 \State Compute:
 \[
 d_R(g) = \min_{a \in \mathcal{A}_R} \|g - a\|, \qquad
 d_L(g) = \min_{a \in \mathcal{A}_L} \|g - a\|
 \]
 \EndFor

 \State Count unoccupied grid points:
 \[
 n_i = \big|\{ g \in \mathcal{G}_i \mid d_R(g) > 2.2~\text{\AA} \wedge d_L(g) > 1.2~\text{\AA} \}\big|
 \]

 \State Compute available volume:
 \[
 V_i = n_i \times (0.5)^3
 \]

 \State Record hotspot:
 \[
 h_i = \big(a_i,\; \text{type}(a_i),\; V_i,\; \mathcal{R}_i \big)
 \]

\EndFor

\State Sort $\{h_i\}$ by $V_i$ in descending order
\State Select top-$k$ hotspots:
\[
\mathcal{H} = \{h_1, \ldots, h_k\}
\]

\Statex \textbf{Phase 3: LLM-Guided Fragment Growing}

\For{each hotspot $h_i \in \mathcal{H}$}

 \State Construct biological context prompt:
 \[
 \text{prompt}_i = \big(\text{type}(a_i),\; V_i,\; \mathcal{R}_i,\; F \big)
 \]

 \State Query LLM to propose building blocks $\mathcal{B}_i$ attachable at $a_i$

\EndFor

\State \Return proposed fragment growths:
\[
\{\mathcal{B}_i\}_{i=1}^{k}
\]

\end{algorithmic}
\end{algorithm}

\section{Benchmarking Results}
\label{benchmarking}

\subsection*{Comparison with DrugGEN on Hit Identification}
We benchmark MolLingo's hit identification pipeline against DrugGEN~\citep{unlu2025target}, a GAN-based molecular generation model, on two protein targets: CDK2 and AKT1. DrugGEN requires retraining a separate model for each target, making large-scale evaluation impractical; we therefore restrict this comparison to these two well-studied oncology targets for which DrugGEN provides top hit results directly, without the need for retraining on our end.

For a fair comparison, both methods leverage ChEMBL~\citep{gaulton2012chembl} as the source of known binders for each target, though in different ways: MolLingo uses ChEMBL binders as a hit screening library, ranking and filtering candidates following the procedure described in Appendix~\ref{admet_ranking}, while DrugGEN uses ChEMBL binders as training data for its target-specific GAN model. For each target, we select the top 30 hits from each method and evaluate them across six metrics: docking score, QED, synthetic accessibility (SA), Lipinski's rule of five, Veber's oral bioavailability criteria, and PAINS filter pass rate.

Table~\ref{table:drug_generation_bolded} reports the results. MolLingo outperforms DrugGEN on docking score for both targets, achieving $-9.308$ vs. $-8.527$ kcal/mol on CDK2 and $-6.368$ vs. $-5.797$ kcal/mol on AKT1, with lower standard deviation in both cases, indicating more consistent hit quality. MolLingo also achieves higher QED and better synthetic accessibility scores on both targets, suggesting that the retrieved hits are not only more potent but also more drug-like and easier to synthesize. Lipinski, Veber, and PAINS scores are comparable between the two methods, with MolLingo performing marginally better or on par across both targets. Notably, MolLingo achieves these results without target-specific retraining, making it substantially more practical for novel or understudied targets where retraining data may be scarce.

\begin{table*}[!hbt]
\centering
\caption{Comparison of MolLingo and DrugGEN on hit identification for CDK2 and AKT1. Results are reported as mean $\pm$ standard deviation over the top 30 hits for each method. Bold values indicate the better performing method for each metric. ($\downarrow$) indicates lower is better; ($\uparrow$) indicates higher is better.}
\label{table:drug_generation_bolded}

\setlength{\tabcolsep}{3pt}

\resizebox{\textwidth}{!}{
\begin{tabular}{lcccccccccccc}
\toprule
& \multicolumn{2}{c}{\textbf{Docking Score ($\downarrow$)}} & \multicolumn{2}{c}{\textbf{QED ($\uparrow$)}} & \multicolumn{2}{c}{\textbf{SA ($\downarrow$)}} & \multicolumn{2}{c}{\textbf{Lipinski ($\uparrow$)}} & \multicolumn{2}{c}{\textbf{Veber ($\uparrow$)}} & \multicolumn{2}{c}{\textbf{PAINS ($\uparrow$)}} \\
\cmidrule(lr){2-3} \cmidrule(lr){4-5} \cmidrule(lr){6-7} \cmidrule(lr){8-9} \cmidrule(lr){10-11} \cmidrule(lr){12-13}
& \textbf{ MolLingo} & \textbf{DrugGEN} & \textbf{ MolLingo} & \textbf{DrugGEN} & \textbf{ MolLingo} & \textbf{DrugGEN} & \textbf{ MolLingo} & \textbf{DrugGEN} & \textbf{ MolLingo} & \textbf{DrugGEN} & \textbf{ MolLingo} & \textbf{DrugGEN} \\
\midrule

\textbf{CDK2} 
& \makecell{$\mathbf{-9.308}$ \\ $\mathbf{\pm 0.424}$} 
& \makecell{$-8.527$ \\ $\pm 0.714$} 
& \makecell{$\mathbf{0.63}$ \\ $\mathbf{\pm 0.095}$} 
& \makecell{$0.518$ \\ $\pm 0.111$} 
& \makecell{$\mathbf{2.848}$ \\ $\mathbf{\pm 0.393}$} 
& \makecell{$3.058$ \\ $\pm 0.487$} 
& \makecell{$3.926$ \\ $\pm 0.279$} 
& \makecell{$\mathbf{3.955}$ \\ $\mathbf{\pm 0.208}$} 
& \makecell{$\mathbf{2.0}$ \\ $\mathbf{\pm 0.0}$} 
& \makecell{$\mathbf{2.0}$ \\ $\mathbf{\pm 0.0}$} 
& \makecell{$\mathbf{0.913}$ \\ $\mathbf{\pm 0.282}$} 
& \makecell{$0.864$ \\ $\pm 0.343$} \\

\midrule

\textbf{AKT1} 
& \makecell{$\mathbf{-6.368}$ \\ $\mathbf{\pm 0.251}$} 
& \makecell{$-5.797$ \\ $\pm 0.313$} 
& \makecell{$\mathbf{0.658}$ \\ $\mathbf{\pm 0.038}$} 
& \makecell{$0.567$ \\ $\pm 0.109$} 
& \makecell{$\mathbf{3.488}$ \\ $\mathbf{\pm 0.194}$} 
& \makecell{$3.741$ \\ $\pm 0.382$} 
& \makecell{$\mathbf{3.991}$ \\ $\mathbf{\pm 0.012}$} 
& \makecell{$3.862$ \\ $\pm 0.345$} 
& \makecell{$\mathbf{2.0}$ \\ $\mathbf{\pm 0.0}$} 
& \makecell{$1.966$ \\ $\pm 0.182$} 
& \makecell{$\mathbf{1.0}$ \\ $\mathbf{\pm 0.0}$} 
& \makecell{$\mathbf{1.0}$ \\ $\mathbf{\pm 0.0}$} \\

\bottomrule
\end{tabular}
}
\end{table*}

\subsection*{hERG Optimization Results}
\label{app:herg-results}

Table~\ref{table:herg} reports results on the hERG benchmark, 
evaluating the set of models on 33 correctly predicted hERG-positive 
withdrawn drugs. The results are consistent with and further 
strengthen the findings from the DILI benchmark. MolLingo with GPT-5.4 
as the base LLM achieves the highest improvement (13.03\%) and the 
highest similarity (36.6\%) among all models, with 100\% validity. 
The block-based representation advantage is again clearly isolated 
by the direct comparison with the base LLM: GPT-5.4 operating on 
raw SMILES yields a large negative improvement ($-32.1\%$), while 
 MolLingo using the same underlying model achieves the best optimization 
performance. Notably, the new results include MolLingo variants built 
on Claude-4.6-Sonnet and Gemini-3-Pro as base LLMs, both of which 
show a consistent pattern: the raw LLM baseline yields low or 
negative improvement, while the MolLingo variant with block-based 
representation substantially recovers and improves performance, 
with markedly higher similarity scores (54.97\% and 51.19\% 
respectively) compared to their raw SMILES counterparts (16.94\% 
and 14.74\%). This cross-model consistency strongly suggests that 
the benefit of block-based representation is model-agnostic rather 
than specific to any particular LLM backbone. Qwen2-family models 
again show competitive improvement but at substantially lower 
similarity, confirming that raw SMILES generation produces less 
conservative edits regardless of model scale.

\paragraph{Excluded Baselines.} Several additional models were considered for benchmarking but excluded for the following reasons. LLaMA-2-7B and Gemini-3.1-Flash did not follow the optimization instructions reliably. In particular, Gemini-3.1-Flash consistently failed to perform structural modifications: instead of proposing modified molecules, it analyzed the input molecule's constituent building blocks and predicted possible properties, without making any changes to the molecular structure. As reliable instruction following is a prerequisite for meaningful comparison, these models were excluded from the benchmark.

We also attempted to benchmark against closely related molecular optimization methods, specifically RePO~\citep{li2026reference} and ReMOL~\citep{wang2018remol}. ReMOL was excluded as the authors do not provide a public GitHub repository, making reproduction infeasible. RePO was excluded as no pretrained model checkpoints were available at the time of this work, and training from scratch was beyond the scope of this comparison.

\begin{table*}[!hbt]
\centering
\caption{hERG Comparison of model performance across multiple 
chemical generation metrics. Bold values represent the 
top-performing model for each metric. Values in parentheses 
represent the standard deviation (std). ($\downarrow$) indicates 
lower is better; ($\uparrow$) indicates higher is better.}
\label{table:herg}
\setlength{\tabcolsep}{3pt}
\resizebox{\textwidth}{!}{
\begin{tabular}{lcccccc}
\toprule
\textbf{Model} & 
\textbf{\makecell{Improvement (\%)\\($\uparrow$)}} & 
\textbf{\makecell{Similarity (\%)\\($\uparrow$)}} & 
\textbf{\makecell{Imprv $\times$ Sim\\($\uparrow$)}} & 
\textbf{\makecell{Drug-likeness\\($\uparrow$)}} & 
\textbf{\makecell{Synth\\($\downarrow$)}} & 
\textbf{\makecell{Validity (\%)\\($\uparrow$)}} \\
\midrule
GPT-5.4~\citep{achiam2023gpt}
& -0.571 (0.526)
& 15.889 (12.740)
& -9.072
& 0.685 (0.182)
& 3.005 (0.652)
& 96.364 \\
MolLingo (GPT-5.4 base)
& $\mathbf{4.050 \ (1.458)}$
& $\mathbf{42.878 \ (11.081)}$
& 173.736
& 0.492 (0.201)
& 3.276 (0.762)
& $\mathbf{100}$ \\
Claude-4.6-Sonnet~\citep{anthropic_claude_2024}
& 0.480 (0.690)
& 22.119 (10.982)
& 10.617
& $\mathbf{0.706 \ (0.153)}$
& $\mathbf{2.645 \ (0.466)}$
& 93.939 \\
MolLingo (Claude-4.6-Sonnet base)
& 5.370 (3.228)
& 54.966 (20.248)
& $\mathbf{295.467}$
& 0.549 (0.096)
& 3.289 (0.323)
& $\mathbf{100}$ \\
Gemini-3-Pro~\citep{google_gemini_2024}
& -0.562 (1.396)
& 16.085 (13.948)
& -9.040
& 0.632 (0.200)
& 3.119 (0.670)
& 98.589 \\
MolLingo (Gemini-3-Pro base)
& 0.675 (3.218)
& 51.186 (17.258)
& 34.526
& 0.553 (0.058)
& 3.196 (0.277)
& $\mathbf{100}$ \\
Qwen2-7B-Instruct~\citep{qwen2}
& 2.974 (0.028)
& 14.276 (5.335)
& 42.476
& 0.568 (0.033)
& 2.707 (0.193)
& $\mathbf{100}$ \\
Qwen2-14B-Instruct~\citep{qwen2}
& 2.974 (0.028)
& 14.276 (5.335)
& 42.476
& 0.568 (0.033)
& 2.707 (0.193)
& $\mathbf{100}$ \\
\bottomrule
\end{tabular}
}
\end{table*}

\subsection*{Molecule Optimization on TOMG-Bench}
We further evaluate MolLingo's molecule optimization capability on TOMG-Bench~\citep{li2024tomg}, a standardized benchmark for text-guided molecular optimization covering three property objectives: LogP (lipophilicity), QED (drug-likeness), and MR (molecular refractivity). We report three metrics for each objective: Success Rate (SR), molecular Similarity (Sim), and their product SR$\times$Sim, which jointly captures optimization effectiveness and structural conservation.

Table~\ref{table:large_model_comparison}, adopted from~\cite{li2024tomg}, compares MolLingo against a wide range of baselines spanning general-purpose frontier LLMs, chemistry-specialized models (MolT5, BioT5), fine-tuned instruction models (OpenMolIns), and the specialized molecular optimization method RePO~\citep{li2026reference}. MolLingo achieves the highest SR, Sim, and SR$\times$Sim across all three objectives (except for QED-Sim), outperforming all baselines by a substantial margin. On LogP, MolLingo achieves an SR$\times$Sim of 0.706, compared to 0.568 for the next best LLM (Claude-3.5) and 0.297 for RePO. On QED, the most challenging objective, where most models show significant drops in performance, MolLingo achieves an SR$\times$Sim of 0.418, more than doubling RePO's score of 0.236 and substantially outperforming all frontier LLMs. On MR, MolLingo achieves 0.752, again the highest across all methods.

Notably, MolLingo's advantage is most pronounced on the SR metric, reflecting the effectiveness of block-based modifications in producing valid, property-satisfying molecules. At the same time, MolLingo maintains the highest similarity scores across all objectives, confirming that its block-level edits are conservative and structurally coherent. The comparison with RePO is particularly informative: despite RePO being a specialized molecular optimization method, MolLingo outperforms it on both SR and SR$\times$Sim across all three objectives, while operating as a general-purpose agent without task-specific training. Among general-purpose LLMs, GPT-5.4 is the strongest baseline but still falls short of MolLingo on all metrics, again highlighting the contribution of the block-based representation over raw SMILES optimization.

\begin{table*}[!hbt]
\centering
\caption{Performance comparison on TOMG-Bench across LogP, QED, and MR optimization objectives. SR denotes Success Rate, Sim denotes Tanimoto Similarity, and SR$\times$Sim represents their product, jointly capturing optimization effectiveness and structural conservation. Bold values indicate the best performing model for each metric. Except for MolLingo, GPT-5.4, and RePO results, all other results are adopted from~\citet{li2024tomg}.}
\label{table:large_model_comparison}

\setlength{\tabcolsep}{5pt} 

\resizebox{\textwidth}{!}{
\begin{tabular}{lccccccccc}
\toprule
\textbf{Model Name} & \multicolumn{3}{c}{\textbf{LogP}} & \multicolumn{3}{c}{\textbf{QED}} & \multicolumn{3}{c}{\textbf{MR}} \\
\cmidrule(lr){2-4} \cmidrule(lr){5-7} \cmidrule(lr){8-10}
& \textbf{SR} & \textbf{Sim} & \textbf{SR$\times$Sim} & \textbf{SR} & \textbf{Sim} & \textbf{SR$\times$Sim} & \textbf{SR} & \textbf{Sim} & \textbf{SR$\times$Sim} \\
\midrule

\textbf{ MolLingo (GPT-5.4 base)} & $\mathbf{0.968}$ & $\mathbf{0.730}$ & $\mathbf{0.706}$ & $\mathbf{0.765}$ & 0.546 & $\mathbf{0.418}$ & $\mathbf{0.998}$ & $\mathbf{0.753}$ & $\mathbf{0.752}$ \\
GPT-5.4 & 0.871 & 0.612 & 0.533 & 0.432 & 0.284 & 0.123 & 0.903 & 0.691 & 0.624 \\
RePO & 0.415 & 0.715 & 0.297 & 0.312 & $\mathbf{0.756}$ & 0.236 & 0.399 & 0.736 & 0.294 \\
GPT-4o & 0.719 & 0.659 & 0.474 & 0.3952 & 0.618 & 0.244 & 0.6864 & 0.642 & 0.44 \\
GPT-4-turbo & 0.7662 & 0.6984 & 0.535 & 0.3946 & 0.6587 & 0.256 & 0.7388 & 0.6821 & 0.504 \\
GPT-3.5-turbo & 0.4048 & 0.6327 & 0.256 & 0.3316 & 0.5635 & 0.187 & 0.412 & 0.6263 & 0.258 \\
Claude-3.5 & 0.797 & 0.7124 & 0.568 & 0.5361 & 0.7042 & 0.378 & 0.6962 & 0.7112 & 0.495 \\
Claude-3 & 0.7984 & 0.6067 & 0.484 & 0.4678 & 0.5855 & 0.274 & 0.6094 & 0.6398 & 0.39 \\
Gemini-1.5-pro & 0.7712 & 0.7022 & 0.542 & 0.4704 & 0.6077 & 0.286 & 0.7876 & 0.6744 & 0.531 \\
Llama3-70B-Instruct & 0.5984 & 0.6028 & 0.361 & 0.2774 & 0.4828 & 0.134 & 0.5684 & 0.6032 & 0.343 \\
Llama3-8B-Instruct & 0.4642 & 0.3658 & 0.17 & 0.2568 & 0.4547 & 0.117 & 0.4332 & 0.4793 & 0.208 \\
Llama3.1-8B-Instruct & 0.399 & 0.4235 & 0.169 & 0.2655 & 0.4499 & 0.119 & 0.4164 & 0.483 & 0.201 \\
Mistral-7B-Instruct-v0.2 & 0.222 & 0.4501 & 0.1 & 0.121 & 0.3244 & 0.039 & 0.1908 & 0.2578 & 0.049 \\
Qwen2-7B-Instruct & 0 & 0.2923 & 0 & 0 & 0 & 0 & 0.0002 & 0.4123 & 0 \\
Yi-1.5-9B & 0.2884 & 0.5461 & 0.157 & 0.1064 & 0.6596 & 0.07 & 0.205 & 0.3724 & 0.076 \\
Chatglm-9B & 0.3666 & 0.6902 & 0.253 & 0.1832 & 0.6506 & 0.119 & 0.3514 & 0.682 & 0.24 \\
Llama-3.2-1B-Instruct & 0.0644 & 0.5055 & 0.033 & 0.0714 & 0.4757 & 0.034 & 0.0822 & 0.441 & 0.036 \\

\midrule
MolT5-small & 0.2158 & 0.1052 & 0.023 & 0.2214 & 0.1031 & 0.023 & 0.2316 & 0.1011 & 0.023 \\
MolT5-base & 0.2074 & 0.1051 & 0.022 & 0.2358 & 0.1054 & 0.025 & 0.1856 & 0.1073 & 0.02 \\
MolT5-large & 0.4244 & 0.1015 & 0.043 & 0.4654 & 0.119 & 0.055 & 0.4496 & 0.1072 & 0.048 \\
BioT5-base & 0.5158 & 0.1526 & 0.079 & 0.5068 & 0.158 & 0.08 & 0.506 & 0.1597 & 0.081 \\

\midrule
OpenMolIns-large (Llama-3.2-1B) & 0.2898 & 0.5951 & 0.172 & 0.1996 & 0.5849 & 0.117 & 0.2644 & 0.5956 & 0.157 \\
OpenMolIns-large (Llama-3.1-8B) & 0.8054 & 0.6678 & 0.538 & 0.5224 & 0.6398 & 0.334 & 0.7122 & 0.6548 & 0.466 \\
OpenMolIns-light (Galactica-125M) & 0.3202 & 0.6547 & 0.21 & 0.269 & 0.6521 & 0.175 & 0.3508 & 0.6435 & 0.226 \\
OpenMolIns-small (Galactica-125M) & 0.4172 & 0.642 & 0.268 & 0.2956 & 0.6385 & 0.189 & 0.3958 & 0.6452 & 0.255 \\
OpenMolIns-medium (Gal-125M) & 0.5904 & 0.5812 & 0.343 & 0.4608 & 0.5859 & 0.27 & 0.5874 & 0.5873 & 0.345 \\
OpenMolIns-large (Galactica-125M) & 0.6454 & 0.5927 & 0.383 & 0.495 & 0.5962 & 0.295 & 0.6388 & 0.5973 & 0.382 \\
OpenMolIns-xlarge (Gal-125M) & 0.7362 & 0.5744 & 0.423 & 0.5786 & 0.5677 & 0.328 & 0.7124 & 0.5697 & 0.406 \\

\bottomrule
\end{tabular}
}
\end{table*}

\subsection*{Attention Probing: Block-Based vs. Raw SMILES Representation}
\label{app:attention-probing}

To validate that block-based SMILES genuinely improves structural alignment with LLM semantic space, we probe the attention mechanism of Qwen2-Instruct-7B~\citep{qwen2} under two input conditions: raw SMILES and block-based SMILES paired with common names. Concretely, for the same molecule (imatinib), we construct two sentences using the same template and query:

\begin{tcolorbox}[
 colback=gray!5,
 colframe=gray!40,
 title=Sentence 1: Raw SMILES input,
 fonttitle=\small\bfseries,
 fontupper=\small\ttfamily,
 breakable
]
Cc1ccc(NC(=O)c2ccc(CN3CCN(C)CC3)cc2)cc1Nc1nccc(-c2cccnc2)n1. 
The building block responsible for hinge binder is:
\end{tcolorbox}

\begin{tcolorbox}[
 colback=gray!5,
 colframe=gray!40,
 title=Sentence 2: Block-based SMILES with common names,
 fonttitle=\small\bfseries,
 fontupper=\small\ttfamily,
 breakable
]
1,4-Dimethylpiperazine [1*]CN1CCN(C)CC1, benzene 
[1*]c1ccc([2*])cc1, formamide [1*]NC([2*])=O, toluene 
[2*]c1ccc(C)c([1*])c1, 2-AMINOPYRIMIDINE 
[2*]Nc1nccc([1*])n1, PYRIDINE [2*]c1cccnc1. The building 
block responsible for hinge binder is:
\end{tcolorbox}

We compute the average attention between all tokens belonging to each block and the tokens of the query phrase ``hinge binder'', and visualize the resulting attention distribution across blocks.

Fig.~\ref{fig:attention_probing} shows the attention distributions across transformer layers for both input conditions. When the input is raw SMILES, attention scores are distributed uniformly across all tokens with no discernible structure, reflecting the inability of the LLM to associate character-level SMILES tokens with chemically meaningful substructures. In contrast, block-based SMILES with common names produces strongly localized attention patterns: the highest attention weights are consistently observed between the tokens of the functionally relevant block and the words describing its associated biological function, while attention to unrelated blocks remains low. This confirms that block-based representation enables the LLM to form explicit and interpretable associations between molecular substructures and their chemical functions, a capability that is entirely absent with raw SMILES input, regardless of the model's underlying chemical knowledge.

\begin{figure}[!hbt]
\centering
\includegraphics[width=0.81\linewidth]{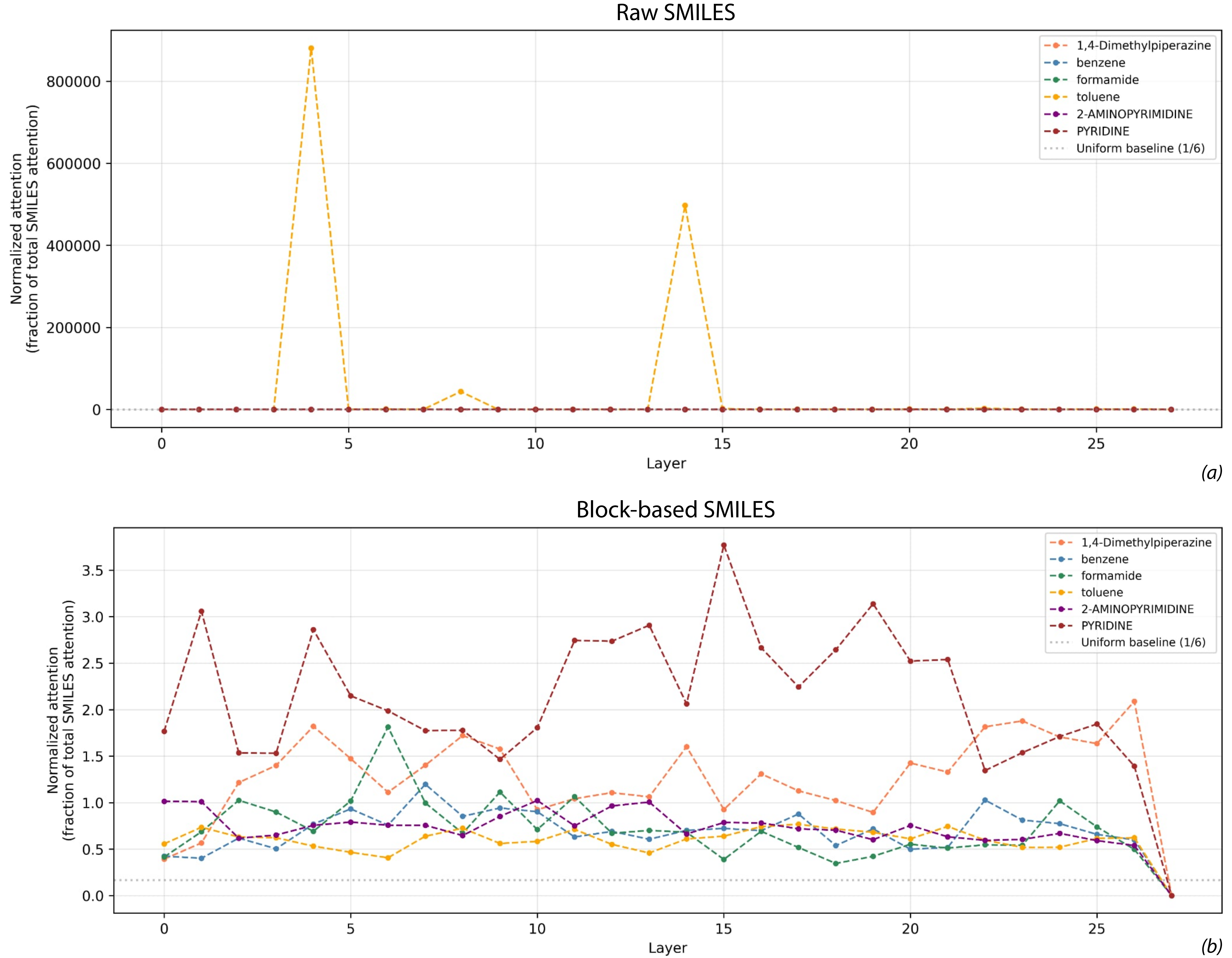}
\caption{Attention heatmaps of Qwen2-Instruct-7B for raw SMILES (top) and block-based SMILES with common names (bottom). Each line represents the average attention between tokens of a molecular block and tokens describing a biological function. Raw SMILES produces random, unstructured attention, while block-based representation yields strongly localized attention at the functionally relevant block.}
\label{fig:attention_probing}
\end{figure}

\section{Biological Target Identification}
\label{app:target-identification}

We developed a computational pipeline to identify protein targets and their associated therapeutic agents by integrating a large language model (LLM) with retrieval-augmented generation (RAG). RAG combines the generative capabilities of an LLM with a retrieval step that grounds outputs in external evidence, improving the accuracy and relevance of extracted information for tasks that require domain-specific or up-to-date knowledge~\citep{lewis2020retrieval}. The system constructs context-specific queries that incorporate disease, disease subtype, relevant biological pathways, and known gene mutations to retrieve pertinent publications. The LLM then synthesizes the retrieved documents into structured outputs, which are standardized to protein identifiers and aggregated across sources. Aggregation allows quantification of how frequently each protein target appears in the context of disease subtype, pathway, and gene mutation, highlighting targets that are specifically relevant to particular patient subgroups.

Most existing computational approaches identify biological targets based only on the disease label, without considering patient- or subgroup-specific information, when querying literature or extracting relevant proteins.
For example, in non-small cell lung cancer, EGFR mutations indicate sensitivity to EGFR-targeted therapies, while ALK fusions highlight ALK inhibitors, and KRAS mutations represent a distinct molecular group with different therapeutic considerations~\citep{lynch2004activating}. A system relying solely on the disease label may conflate these subgroups, leading to inclusion of targets that are not actionable for all patients. Incorporating subtype, pathway, and mutation information ensures that extracted targets are context-specific, providing a more precise and biologically meaningful representation.

The structured output supports downstream analyses such as target prioritization, integration with predictive models, or decision support systems. By combining evidence retrieval with generative modeling and systematic aggregation, this method enables reproducible identification of protein targets that are relevant to both disease and molecular context.

We evaluate the biological target identification pipeline on a diverse set of 30 diseases spanning oncology, neurology, cardiovascular disease, metabolic disorders, and autoimmune conditions. The pipeline achieves 100\% accuracy on 28 out of 30 diseases, meaning that for each of these diseases, all extracted information, including the primary therapeutic protein target, its UniProt identifier, and the associated reference, is correct. The two remaining cases are considered unsuccessful not because the returned targets are incorrect, but because the identified proteins lack sufficiently established evidence as primary therapeutic targets: their roles in the respective diseases remain under active investigation, and no strong consensus exists in the literature at the time of evaluation.

The two cases where the pipeline did not return the expected primary target are not outright failures but rather reflect the inherent complexity and ambiguity of target-disease associations in the literature.
Table~\ref{tab:target-identification} reports the full results of 
the Literature Agent on all 30 diseases, including the identified 
protein target, UniProt ID, and supporting reference.

\begin{table*}[!hbt]
\centering
\caption{Biological target identification results across 30 diverse 
diseases. For diseases with multiple identified targets, all 
returned targets are listed. $^*$ indicates the two cases where 
the primary expected target was not returned as the top result.}
\label{tab:target-identification}
\setlength{\tabcolsep}{6pt}
\resizebox{\textwidth}{!}{
\begin{tabular}{llll}
\toprule
\textbf{Disease} & \textbf{Protein Target} & \textbf{UniProt ID} & 
\textbf{Reference} \\
\midrule
Breast cancer & Estrogen Receptor Alpha (ER$\alpha$) & P03372 & 
Targeting the Estrogen Receptor\ldots \\
& HER2 (ERBB2) & P04626 & Targeting HER2-positive breast cancer\ldots \\
Non-small cell lung cancer & KRAS & P01116 & Protein Kinase C as a 
Therapeutic Target\ldots \\
Melanoma & BRAF & P15056 & BRAF as therapeutic target in melanoma \\
Chronic myeloid leukemia & BCR-ABL1 & P00519 & Effects of a selective 
inhibitor of the Abl\ldots \\
Colon cancer & VEGF & P15692 & Targeted Therapy Drugs for Colorectal 
Cancer \\
Prostate cancer & Androgen Receptor & P10275 & Protein Degradation 
Inducers of Androgen Receptor\ldots \\
Acute myeloid leukemia & FLT3 & P36888 & Midostaurin prolongs survival 
in FLT3-mutated AML \\
Rheumatoid arthritis & TNF-$\alpha$ & P01375 & Targeting rheumatoid 
arthritis\ldots \\
Psoriasis & Interleukin-17A & Q16552 & Psoriasis: rationale for 
targeting IL-17 \\
Multiple sclerosis$^*$ & MOG & Q16653 & Identification of a pathogenic 
antibody response\ldots \\
& MBP & P02686 & Multiple Sclerosis: An Important Role\ldots \\
Alzheimer's disease & Amyloid-beta (APP) & P05067 & Amyloid 
beta-protein assembly\ldots \\
Parkinson's disease & Alpha-synuclein & P37840 & Targeting the 
alpha-synuclein protein\ldots \\
Type 2 diabetes & PTP1B & P18031 & PTP1B as a Therapeutic Target\ldots \\
Hypercholesterolemia & PCSK9 & Q8NBP7 & Targeting PCSK9\ldots \\
Hypertension$^*$ & Angiotensin II receptor type 1 & P30556 & 
Angiotensin II receptor blockers\ldots \\
& Endothelin-1 & P05305 & Endothelin-1 is a potent vasoconstrictor\ldots \\
& ERAP1 & Q9NZ08 & Potential Therapeutic Drug Targets\ldots \\
& ACVRL1 & Q04771 & Potential Therapeutic Drug Targets\ldots \\
& Renin & P00797 & Identification of novel proteomic biomarkers\ldots \\
Asthma & IL-4 receptor alpha chain & P24394 & Monoclonal antibodies 
targeting IL4R\ldots \\
COPD & Alpha-1 antitrypsin & P01009 & Role of elastases in the 
pathogenesis of COPD\ldots \\
HIV & CD4 & P01730 & HIV Immune Cell Receptors\ldots \\
Hepatitis C & NS5A & Q9YKI6 & Phosphorylation of hepatitis C virus NS5A\ldots \\
Influenza & Hemagglutinin & P03437 & Hemagglutinin protein of Asian 
strains\ldots \\
Malaria & PfATP6 & E1CC99 & Artemisinins target the SERCA\ldots \\
Tuberculosis & InhA & P9WIR7 & The progress of M. tuberculosis drug 
targets \\
Crohn's disease & NOD2 & Q9HC29 & NOD2 and its role in Crohn's 
disease \\
Ulcerative colitis & Interleukin-23 & P78558 & Effectiveness and 
Safety of Selective IL-23\ldots \\
Osteoporosis & RANKL & O14788 & RANKL as a target for osteoporosis \\
Sickle cell disease & Hemoglobin subunit beta & P68871 & Silencing 
of BCL11A\ldots \\
& BCL11A & Q9H165 & Silencing of BCL11A\ldots \\
Epilepsy & NaV1.1 (SCN1A) & Q99250 & SCN1A/NaV1.1 
channelopathies\ldots \\
Migraine & CGRP Receptor (CALCRL + RAMP1) & Q16602 & 
Targeting enkephalins\ldots \\
Obesity & FGF-21 & P54652 & Adipokines as Clinically Relevant 
Therapeutic Targets\ldots \\
COVID-19 & Spike glycoprotein & P0DTC2 & SARS-CoV-2 Proteins\ldots \\
\bottomrule
\end{tabular}
}
\end{table*}

\section{Shared Memory Module}
\label{app:memory}

The shared memory module serves as the central coordination mechanism of MolLingo, enabling persistent context and iterative reasoning across agents and pipeline stages. We describe its design in terms of schema, storage, retrieval, and lifecycle management.

\subsection*{Schema and Content Types}

The shared memory maintains two categories of information. \textbf{Structured records} are stored as typed key-value entries with a defined schema, including: target annotations (disease name, protein name, UniProt ID, relevant pathways and mutations), candidate molecules (SMILES, block-based representation, docking scores, ADMET scores, QED, SA), and pipeline state (current stage, completed subgoals, pending tasks). \textbf{Unstructured context} is stored as free-text entries, including reasoning traces from each agent, literature summaries retrieved by the Literature Agent, design rationales for proposed modifications, and human-in-the-loop feedback. Each entry is tagged with a timestamp, the agent that wrote it, and the pipeline stage at which it was created, enabling provenance tracking and stage-specific retrieval.

\subsection*{Retrieval Mechanism}

Agents access the shared memory through two retrieval modes. \textbf{Exact retrieval} is used for structured records, where agents query by key (e.g., UniProt ID, molecule SMILES) to retrieve specific entries. \textbf{Semantic retrieval} is used for unstructured context, where a query string is embedded and matched against stored entries using cosine similarity over dense vector representations, allowing agents to retrieve relevant reasoning traces or literature summaries without knowing their exact content. In practice, the Orchestrator primarily uses exact retrieval to track pipeline state, while the Literature Agent and Chemist Agent use semantic retrieval to access accumulated biological and chemical context.

\subsection*{Memory Size and Eviction}

The shared memory is designed to hold the full context of a single drug discovery session, which in practice involves a target and a moderate number of hits, leads, and candidate drugs (tens to hundreds). No eviction policy is applied within a session, as the total memory footprint remains manageable. When the research subject changes (for example, when the user switches to a new protein target or disease) the memory module supports a selective reset operation that clears all target-specific entries (candidate molecules, docking results, ADMET profiles, design rationales, and literature summaries associated with the previous target) while retaining session-level context such as user preferences, high-level design goals, and human-in-the-loop feedback. This selective reset prevents context contamination across different research subjects while preserving information that remains relevant across targets. A full memory reset is also available when starting an entirely new discovery campaign.

\subsection*{Read and Write Protocol}

All agents interact with the shared memory through a unified read/write interface. Write operations append new entries without overwriting existing ones, preserving the full history of the discovery process. Read operations return the most recent entry matching the query by default, with an option to retrieve the full history of entries for a given key. The Orchestrator additionally maintains a session log that records the sequence of agent calls, tool invocations, and memory operations, providing a complete audit trail of the discovery workflow that supports human-in-the-loop review and debugging.

\section{Orchestrator Agent: Decision-Making and Coordination}
\label{app:orchestrator}

The Orchestrator Agent follows a ReAct-style~\citep{yao2022react} decision-making loop, interleaving reasoning steps with tool invocations and memory operations to decompose and execute the drug discovery workflow. At each step, the Orchestrator is provided with the current task objective, the history of completed subgoals stored in shared memory, and the available agents and tools. It then reasons about the next action to take, invokes the appropriate agent or tool, observes the result, updates the shared memory, and repeats until the objective is satisfied.

\subsection*{Decision-Making Algorithm}

Algorithm~\ref{alg:orchestrator} describes the Orchestrator's decision-making loop. The Orchestrator is implemented as a prompted GPT-5.4 model with access to a fixed set of callable actions: invoking the Literature Agent, invoking the Chemist Agent, calling a tool directly, writing to or reading from shared memory, requesting human feedback, and terminating the session. At each step, the full task context, including the original objective, completed subgoals, and relevant memory entries, is included in the prompt, allowing the Orchestrator to make informed decisions without maintaining internal state.

\begin{algorithm}[t]
\caption{Orchestrator Decision-Making Loop}
\label{alg:orchestrator}
\begin{algorithmic}[1]
\Require Task objective $\mathcal{O}$, shared memory $\mathcal{M}$, 
available actions $\mathcal{A}$
\Ensure Completed drug discovery workflow

\State Decompose $\mathcal{O}$ into initial subgoal queue $\mathcal{Q}$
\While{$\mathcal{Q}$ is not empty}
 \State $q \gets$ \textsc{NextSubgoal}($\mathcal{Q}$)
 \State $c \gets$ \textsc{RetrieveContext}($\mathcal{M}$, $q$)
 \State Construct prompt $p \gets (q, c, \mathcal{A})$
 \State $a \gets$ \textsc{LLM}($p$) \Comment{Reason and select action}
 \If{$a$ = \textsc{InvokeLiteratureAgent}}
 \State $r \gets$ \textsc{LiteratureAgent}($q$)
 \ElsIf{$a$ = \textsc{InvokeChemistAgent}}
 \State $r \gets$ \textsc{ChemistAgent}($q$)
 \ElsIf{$a$ = \textsc{CallTool}($t$)}
 \State $r \gets$ \textsc{Tool}($t$, $q$)
 \ElsIf{$a$ = \textsc{RequestHumanFeedback}}
 \State $r \gets$ \textsc{HumanFeedback}($q$)
 \ElsIf{$a$ = \textsc{Terminate}}
 \State \textbf{break}
 \EndIf
 \State \textsc{WriteMemory}($\mathcal{M}$, $q$, $r$)
 \State $\mathcal{Q} \gets$ \textsc{UpdateSubgoals}($\mathcal{Q}$, $q$, $r$)
\EndWhile
\State \Return \textsc{RetrieveFinalCandidates}($\mathcal{M}$)
\end{algorithmic}
\end{algorithm}

\subsection*{Prompt Template}

The following prompt template is used at each step of the Orchestrator's decision-making loop. Fields in curly braces are filled in at runtime from the task context and shared memory.

\begin{tcolorbox}[
 colback=gray!5,
 colframe=gray!40,
 title=Orchestrator Prompt Template,
 fonttitle=\small\bfseries,
 fontupper=\small\ttfamily,
 breakable
]
You are the Orchestrator of MolLingo, an autonomous drug discovery system. Your role is to decompose the given objective into subgoals and coordinate the Literature Agent, Chemist Agent, and available tools to complete the drug discovery workflow.

Current objective: \{OBJECTIVE\}

Completed subgoals:
\{COMPLETED\_SUBGOALS\}

Relevant context from memory:
\{MEMORY\_CONTEXT\}

Available actions:
- invoke\_literature\_agent(task): retrieve and synthesize 
 biological knowledge for a given task or target.
- invoke\_chemist\_agent(task): perform molecular reasoning, 
 hit identification, fragment growing, or lead optimization.
- call\_tool(tool\_name, inputs): invoke a specific tool 
 (e.g., docking, ADMET prediction, ChEMBL retrieval).
- request\_human\_feedback(question): ask the user for 
 clarification, constraints, or high-level guidance.
- terminate(): end the session and return final candidates.

Based on the current state, reason step by step about what needs to be done next, then select the most appropriate action. Respond in the following format:

Reasoning: \{step-by-step reasoning\}
Action: \{selected action and inputs\}
\end{tcolorbox}

\noindent The Orchestrator's reasoning trace and selected actions are written to the shared memory at each step, providing a complete audit trail of the workflow and enabling human-in-the-loop review at any stage of the discovery process.


\section{ MolLingo's Toolset}
\label{app:toolset}

 MolLingo's agents are equipped with a curated set of tools that support each stage of the drug discovery pipeline. We describe the core tools below, grouped by function.

\paragraph{Molecular Docking}

Docking is performed using AutoDock Vina~\citep{trott2010autodock}. To define the binding site, the Literature Agent queries the RCSB Protein Data Bank~\citep{berman2000protein} for co-crystal structures of the target protein with a bound ligand, selecting the structure with the lowest resolution (highest quality) available. The co-crystallized ligand is removed and its binding pocket is used to define the docking search box. If no co-crystal structure is available, the pipeline falls back to a RCSB structure without a ligand and performs blind docking over the full protein surface. If no experimental structure exists, an AlphaFold~\citep{jumper2021highly} predicted structure is retrieved and blind docking is performed. This three-tier fallback strategy ensures that docking can be performed for any target, including novel or understudied proteins with no experimental structural data.

\paragraph{SMILES to Block-Based Representation}
Given a SMILES string, this tool tokenizes the molecule into a sequence of building blocks from the learned BFE vocabulary, following the procedure described in \S\ref{sec:smiles}. Each block is converted to a block-based SMILES string with wildcard attachment points and optionally paired with its common chemical name, producing the block-based representation used by the Chemist Agent for molecular reasoning and optimization.

\paragraph{Fragment Screening with DrugCLIP}
Given a protein target and the curated fragment library described in Appendix~\ref{sec:fragment-library}, this tool uses DrugCLIP~\citep{gao2023drugclip} to rank all fragments by predicted binding affinity to the target. DrugCLIP embeds both the protein and each fragment into a shared semantic space and retrieves the top $k$ fragments by similarity score, enabling efficient virtual screening without explicit docking. The retrieved fragments are subsequently passed to the docking tool for binding pose evaluation.

\paragraph{Context-Aware Fragment Growing}

Given a docked hit fragment and its binding pose, this tool identifies 
growth hotspots and queries the Chemist Agent to propose fragment 
extensions, following the procedure described in \S\ref{sec:h2l} and 
detailed in Algorithm~\ref{alg:h2l}.

\paragraph{ADMET Optimization}

Given a lead molecule and its ADMET profile, this tool queries the 
Chemist Agent to propose block-level modifications that improve ADMET 
properties while preserving structural similarity to the original 
molecule, following the procedure described in \S\ref{sec:lead}.

\paragraph{Supporting Tools}

The following utility tools support the Literature Agent and Chemist 
Agent throughout the pipeline:

\begin{itemize}

 \item \textbf{ChEMBL binder retrieval.} Queries ChEMBL~\citep{gaulton2012chembl} for known binders of a given protein target, returning compounds with reported binding affinity data.

 \item \textbf{Molecular identifier conversion.} A suite of conversion tools: \texttt{name2smiles} and \texttt{smiles2name} for interconversion between common names and SMILES strings; \texttt{uniprotid2name} and \texttt{name2uniprotid} for mapping between protein names and UniProt identifiers; and \texttt{disease2drugs} for retrieving approved drugs associated with a given disease indication.

 \item \textbf{Drug-likeness filtering and ADMET ranking.} Filters candidate molecules by drug-likeness criteria and ranks them by the composite ADMET score described in \S\ref{sec:admet-oracle}.

 \item \textbf{Butina clustering.} Clusters a set of retrieved binders using the Butina algorithm~\citep{butina1999unsupervised} based on Tanimoto similarity, and selects one representative molecule per cluster. This ensures that the hits forwarded to the hit-to-lead stage are chemically diverse rather than redundant.

 \item \textbf{Property scoring.} Computes standard molecular property scores including synthetic accessibility (SA)~\citep{ertl2009estimation}, quantitative estimate of drug-likeness (QED)~\citep{bickerton2012quantifying}, and Tanimoto similarity~\citep{tanimoto1958elementary} between pairs of molecules, used throughout the pipeline for filtering, ranking, and similarity-constrained optimization.

\end{itemize}


\section{Implementation Details}
\subsection*{Prompt Templates}
\label{app:prompt}

\subsubsection*{Molecule Optimization Prompt}

The following prompt template is used for all baseline LLM models 
in the ADMET optimization benchmark. The molecule SMILES and target 
ADMET property are filled in at inference time.

\begin{tcolorbox}[
 colback=gray!5,
 colframe=gray!40,
 title=Molecule Optimization Prompt,
 fonttitle=\small\bfseries,
 fontupper=\small\ttfamily,
 breakable
]
Given the molecule \{SMILES\}, proposestructural modifications 
aimed at reducing its risk of \{ADMET\_PROPERTY\} (e.g., liver injury 
for DILI, cardiotoxicity for hERG).

For each proposed analog, briefly explain the rationale behind the 
modification and how it may lower \{ADMET\_PROPERTY\} risk.

In the conclusion, provide five candidate molecules with potentially 
improved \{ADMET\_PROPERTY\} profiles, formatted as a JSON dictionary 
where each entry includes the SMILES string and a short description 
of the modification. Make sure all the proposed SMILES are valid.

Example:
\begin{verbatim}[!hbt]
{
 "candidate_1": {
 "smiles": "CC(C1=CC=CC=C1)N(C(=O)N)O",
 "modification": "Removed sulfur-containing heterocycle to reduce potential
 bioactivation and reactive metabolite formation."
 },
 ...
 "candidate_5": {
 "smiles": "CC(C1=CC2=CC=CC=C2O1)N(C(=O)N)O",
 "modification": "Replaced sulfur with oxygen to decrease likelihood of forming
 toxic sulfur metabolites."
 }
}
\end{verbatim}
\end{tcolorbox}

\noindent Note that for MolLingo, the molecule is provided in block-based SMILES format with common block names rather than raw SMILES, while the output format remains identical. This is the only difference between the MolLingo prompt and the baseline prompt, isolating the effect of molecular representation on optimization performance.

\subsubsection*{Hit-to-Lead Optimization Prompt}

The following prompt template is used for all baseline LLM models 
in the hit-to-lead benchmark. The molecule SMILES and protein target 
are filled in at inference time.

\begin{tcolorbox}[
 colback=gray!5,
 colframe=gray!40,
 title=Hit-to-Lead Optimization Prompt,
 fonttitle=\small\bfseries,
 fontupper=\small\ttfamily,
 breakable
]
You are an expert medicinal chemist. Given a hit molecule and a 
protein target, your task is to proposestructural modifications 
that improve the binding affinity of the molecule to the target 
protein, while maintaining sufficient structural similarity to the 
original molecule to preserve its pharmacophore.

Molecule SMILES: \{SMILES\}

Protein target: \{PROTEIN\_NAME\} (UniProt: \{UNIPROT\_ID\})

Guidelines:

- Propose modifications that are likely to form new favorable interactions (hydrogen bonds, hydrophobic contacts, or electrostatic interactions) with the target protein.

- Each modification should be conservative — the proposed analog should remain structurally similar to the original molecule (Tanimoto similarity > 0.4).

- Avoid modifications that introduce known toxicophores or violate Lipinski's rule of five.

- Each proposed molecule must be a valid, chemically synthesizable structure.

For each proposed analog, briefly explain the rationale behind the modification and how it may improve binding affinity to \{PROTEIN\_NAME\}.

In the conclusion, provide five candidate molecules, formatted as a JSON dictionary where each entry includes the SMILES string and a short description of the modification. Make sure all proposed SMILES are valid.

Example:
\begin{verbatim}
{
 "candidate_1": {
 "smiles": "CC(C1=CC=CC=C1)N(C(=O)N)O",
 "modification": "Added hydrogen bond donor at para position to form additional
 interaction with Asp residue in the binding site."
 },
 ...
 "candidate_5": {
 "smiles": "CC(C1=CC2=CC=CC=C2O1)N(C(=O)N)O",
 "modification": "Replaced phenyl with naphthalene to improve hydrophobic
 packing in the lipophilic pocket."
 }
}
\end{verbatim}
\end{tcolorbox}

\noindent For MolLingo, the molecule is provided in block-based SMILES format with common block names rather than raw SMILES, and the Chemist Agent is additionally provided with the docking pose, hotspot analysis, and biological context of the binding site as described in \S\ref{sec:h2l}. All other instructions remain identical, isolating the contribution of the structural context and molecular representation on optimization performance.

\subsection*{Hyperparameters and Pipeline Configuration}
Several key parameters in MolLingo's pipeline are user-configurable and can be adjusted based on available computational resources and time constraints. These include the number of protein targets returned per disease by the Literature Agent, the number of top hits retrieved from ChEMBL per target, the number of fragment modifications proposed per hotspot in the hit-to-lead stage, the number of structural modifications proposed per iteration in lead optimization, and the number of lead optimization iterations. Increasing these values broadens the search space and may yield better candidates at the cost of additional computation and LLM calls. For all experiments reported in this paper, we use the following settings: one top hit retrieved from ChEMBL per target, five fragment modifications proposed per hotspot in the hit-to-lead stage, 15 structural modifications proposed per lead optimization iteration, and two lead optimization iterations per molecule.

\textbf{Butina clustering.} Since ChEMBL binders for a given target often share common scaffolds, we apply the Butina algorithm~\citep{butina1999unsupervised} to cluster retrieved compounds by Tanimoto similarity using a distance threshold of 0.7, and select one representative molecule per cluster. This ensures that the hits forwarded to the hit-to-lead stage are chemically diverse rather than redundant.

\subsection*{List of Protein Targets for Hit-to-Lead Evaluation}
\label{app:proteins}

Table~\ref{tab:proteins} lists the 30 protein targets used in the hit-to-lead evaluation, spanning diverse therapeutic areas including oncology, epigenetics, metabolism, and apoptosis.

\begin{table*}[!hbt]
\centering
\caption{Protein targets used for hit-to-lead evaluation, grouped by biological function.}
\label{tab:proteins}

\setlength{\tabcolsep}{40pt}

\resizebox{\textwidth}{!}{
\begin{tabular}{llll}
\toprule
\textbf{UniProt ID} & \textbf{Gene Name} & \textbf{UniProt ID} & \textbf{Gene Name} \\
\midrule

\multicolumn{4}{l}{\textit{Kinases}} \\
P31749 & AKT1 & P28482 & MAPK1 \\
P27361 & MAPK3 & P06493 & CDK1 \\
P24941 & CDK2 & P11802 & CDK4 \\
P00519 & ABL1 & O60674 & JAK2 \\
P49841 & GSK3B & O14757 & CHEK1 \\

\midrule
\multicolumn{4}{l}{\textit{Epigenetic regulators}} \\
Q13547 & HDAC1 & Q92769 & HDAC2 \\
P26358 & DNMT1 & Q15910 & EZH2 \\
O60341 & KDM1A & & \\

\midrule
\multicolumn{4}{l}{\textit{Metabolic enzymes}} \\
P00374 & DHFR & P04818 & TYMS \\
P04035 & HMGCR & P49327 & FASN \\
O75874 & IDH1 & P48735 & IDH2 \\
P00338 & LDHA & P00558 & PGK1 \\

\midrule
\multicolumn{4}{l}{\textit{Apoptosis \& proteolysis}} \\
P42574 & CASP3 & Q14790 & CASP8 \\
P08253 & MMP2 & P14780 & MMP9 \\

\midrule
\multicolumn{4}{l}{\textit{Chaperones \& DNA repair}} \\
P07900 & HSP90AA1 & P08238 & HSP90AB1 \\
P09874 & PARP1 & & \\

\bottomrule
\end{tabular}
}

\end{table*}

\section{Use Cases}
\label{app:use-cases}

We demonstrate MolLingo's end-to-end drug discovery capabilities through three representative use cases, illustrating the system's ability to autonomously navigate the pipeline from disease input to optimized drug candidate.

\paragraph{Disease to Biological Target.} 
Fig.~\ref{fig:demo1} illustrates MolLingo's target identification capability. Given a disease name as input, the Literature Agent retrieves and synthesizes relevant biological knowledge to identify the most contextually appropriate protein target, returning its UniProt identifier and supporting evidence from the literature.

\paragraph{Protein to Lead.}
Fig.~\ref{fig:demo2} demonstrates the pipeline from a protein target 
to a lead compound. Given a target protein, the Chemist Agent 
retrieves known binders from ChEMBL, applies Butina clustering to 
select a diverse set of representative hits, and performs 
docking-guided fragment growing on the top-ranked hit to produce 
an elaborated lead molecule with improved predicted binding affinity.

\paragraph{Lead Optimization.}
Fig.~\ref{fig:demo3} shows MolLingo's lead optimization capability. 
Given a lead molecule and its ADMET profile, the Chemist Agent 
iteratively proposes block-level modifications to improve 
drug-likeness and reduce toxicity liabilities, with each proposed 
molecule evaluated by the ADMET oracle models and the reasoning 
trace stored in shared memory for iterative refinement.

\begin{figure}[!hbt]
\centering
\includegraphics[width=\linewidth]{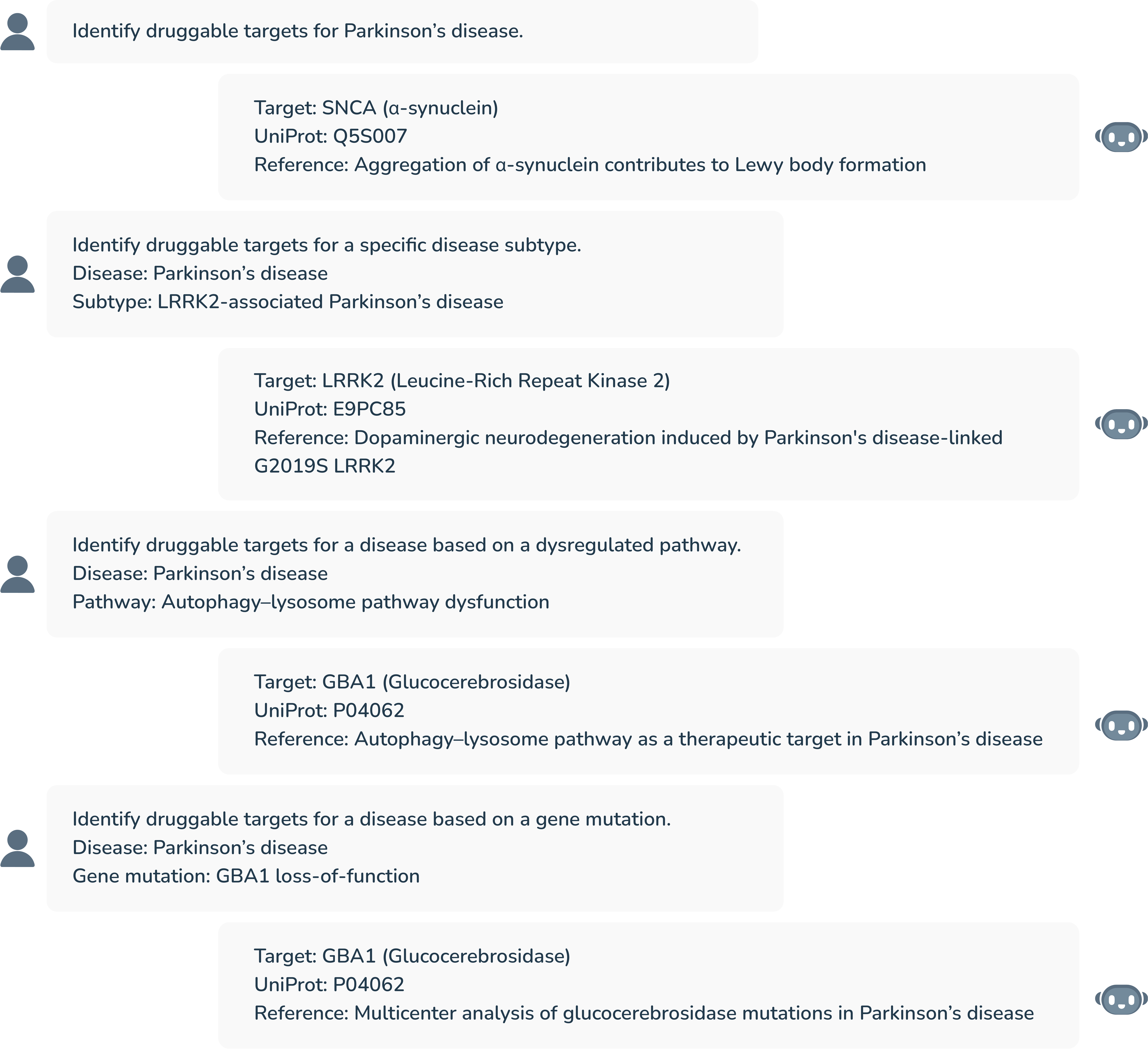}
\caption{Disease to biological target: the Literature Agent 
identifies the primary therapeutic protein target and its UniProt 
identifier from a disease name input.}
\label{fig:demo1}
\end{figure}

\begin{figure}[!hbt]
\centering
\includegraphics[width=\linewidth]{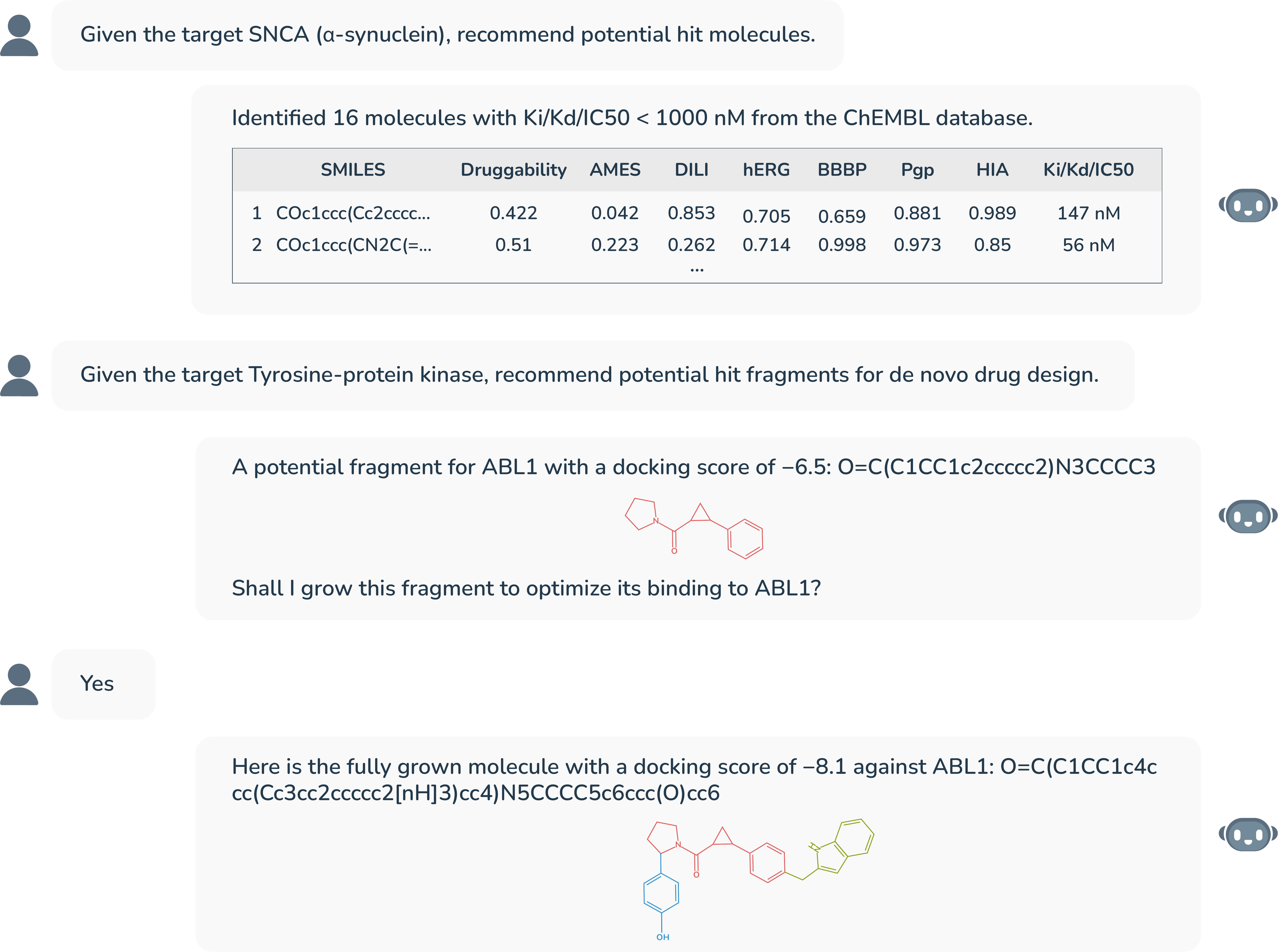}
\caption{Protein to lead: starting from a target protein, MolLingo 
retrieves known binders, clusters them for diversity, and performs 
docking-guided fragment growing to produce an optimized lead 
compound.}
\label{fig:demo2}
\end{figure}

\begin{figure}[!hbt]
\centering
\includegraphics[width=\linewidth]{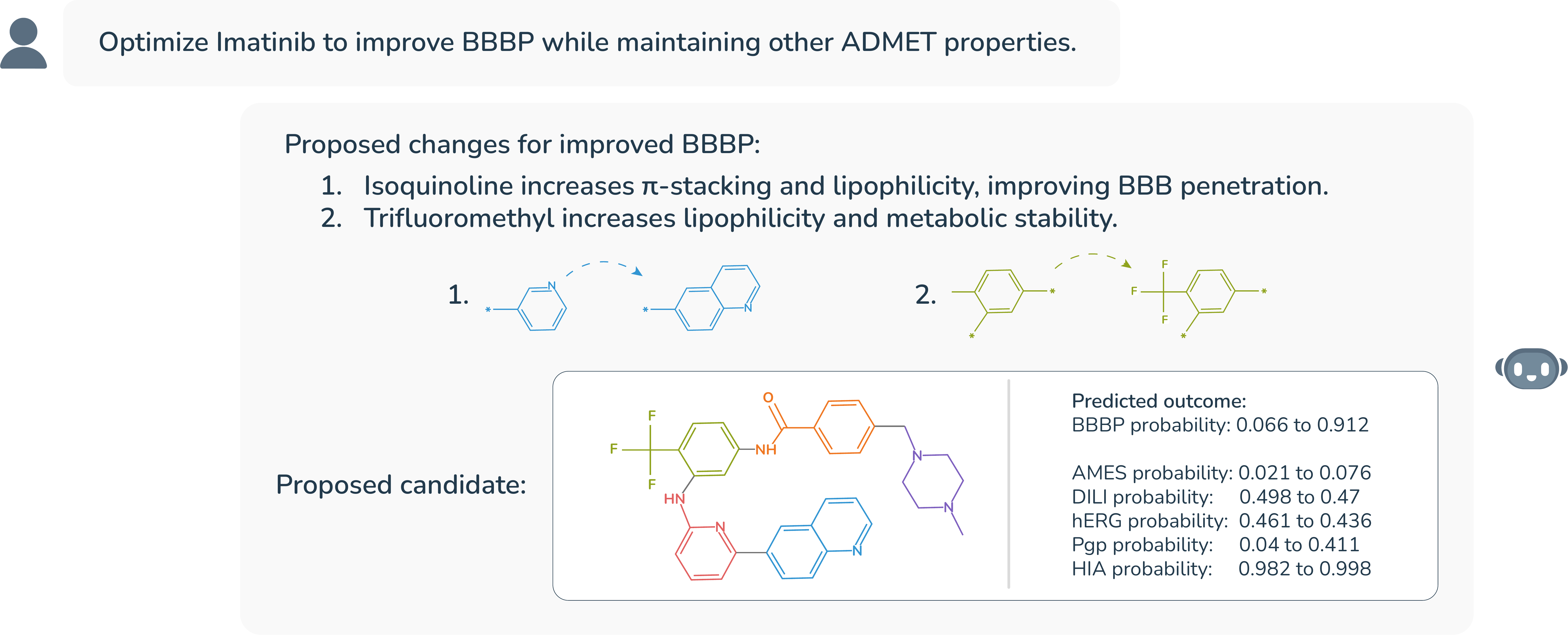}
\caption{Lead optimization: the Chemist Agent iteratively refines 
a lead molecule through block-level modifications guided by ADMET 
profiling and accumulated reasoning traces.}
\label{fig:demo3}
\end{figure}


\end{document}